\newcommand{\GF}{\mathbb{GF}}
\newcommand{\BP}{\mathbb P}
\newcommand{\TW}{\mathrm{TW}}
\newcommand{\TODO}[1]{\textcolor{red}{[TODO\@ifnotempty{#1}{: #1}]}}
\newcommand{\piotr}[1]{\textcolor{blue}{\textbf{Piotr:} #1}}
\title{Exponentially Improving the Complexity of Simulating the Weisfeiler-Lehman Test with Graph Neural Networks}
\author{%
Anders Aamand \\
MIT \\
\texttt{aamand@mit.edu}
\And
Justin Y.\ Chen \\
MIT \\
\texttt{justc@mit.edu}
\And
Piotr Indyk \\
MIT \\
\texttt{indyk@mit.edu}
\And
Shyam Narayanan \\
MIT \\
\texttt{shyamsn@mit.edu}
\And
Ronitt Rubinfeld \\
MIT \\
\texttt{ronitt@mit.edu}
\And
Nicholas Schiefer \\
MIT \\
\texttt{schiefer@mit.edu}
\And
Sandeep Silwal \\
MIT \\
\texttt{silwal@mit.edu}
\And
Tal Wagner\thanks{Work done prior to joining Amazon.} \\
Amazon AWS \\
\texttt{tal.wagner@gmail.com}
}
\begin{document}

\maketitle

\begin{abstract}
Recent work shows that the expressive power of Graph Neural Networks (GNNs) in distinguishing non-isomorphic graphs is exactly the same as that of the Weisfeiler-Lehman (WL) graph test. In particular, they show that the WL test can be simulated by GNNs. However, those simulations involve neural networks for the “combine” function of size polynomial or even exponential in the number of graph nodes $n$, as well as feature vectors of length linear in $n$. 

We present an improved simulation of the WL test on GNNs with {\em exponentially} lower complexity. In particular,  the neural network implementing the  combine function  in each node has only $\polylog (n)$ parameters, and the feature vectors exchanged by the nodes of GNN consists of only $O(\log n)$ bits. We also give logarithmic lower bounds for the feature vector length and the size of the neural networks, showing the (near)-optimality of our construction. 
\end{abstract}

\section{Introduction}\label{sec:intro}

Graph Neural Networks (GNNs) have become a popular tool for machine learning on graph-structured data, with applications in social network prediction~\cite{hamilton2017inductive}, traffic prediction~\cite{yu2018spatio}, recommender systems~\cite{ying2018graph}, drug discovery~\cite{wieder2020compact}, computer vision~\cite{li2019graph,fey2019deep,monti2017geometric,qi2017pointnet}, and combinatorial optimization~\cite{cappart2021combinatorial}.  Standard message passing GNNs use the topology of the input graph to define the network structure: in each step $k$, a node aggregates messages from each of its neighbors and combines them using a function $\phi^{(k)}$, computed by a neural network, to determine its message for the next round. Crucially, the aggregation function must be {\em symmetric}, to ensure that the output of GNNs is invariant under node permutation. This restriction raised questions about how expressive such network architectures are, and in particular what classes of graphs are distinguishable using GNNs.

The seminal works of Xu et al.~\cite{xu2019} and Morris et al.~\cite{morris2019wl} (see also~\cite{grohe2021logic}) showed that GNNs are exactly as powerful in distinguishing graphs as the Weisfeiler-Lehman (WL) test \cite{wlpaper}, also known as \emph{color refinement}. This combinatorial procedure is a necessary but not sufficient test for graph isomorphism. It proceeds in repeated rounds: in each round, a node labels itself with the ``hash'' of the multiset of labels of its neighbors. The aforementioned papers show that (i) GNNs can simulate the WL test and (ii) GNNs can only distinguish those graphs that the WL test determines to be different. This provides a complete characterization of the expressive power of GNN architectures. 

The connection between GNNs and the WL test has spawned a wave of new results studying GNN variants that either match the distinguishing power of the WL test or adopt new methods beyond message passing on the edges of the input graph to overcome this barrier (see the excellent surveys~\cite{morris2021power,morris2021weisfeiler,grohe2021logic} for an overview of this area).
However, the results in the original as well the follow up works mostly focus on {\em qualitative} questions (how expressive GNNs are) as opposed to  {\em quantitative} questions such as the network complexity.  In particular, while Xu et al.~\cite{xu2019} show that there exist GNN architectures that can simulate the WL coloring procedure as long as the aggregation step is injective, they rely on the universal approximation theorem to show that there exists a neural network that can simulate the hash function used in WL. 
As a result, the size of the network could be exponential in the number of nodes~$n$.  In contrast, the construction of Morris et al.~\cite{morris2021power} uses networks of size polynomial in $n$. However,  the weights of the network implementing $\phi^{(k)}$ in their construction depend on the structure of the underlying graph, which
suffices for  {\em node} classification, but is not sufficient for the context of {\em graph} classification.

Overall, the quantitative understanding of the complexity of simulating WL remains an open problem. Indeed, the survey~\cite{grohe2021logic} states {\em ``The size of the GNNs and related parameters like depth and width, which directly affect the complexity of inference and learning, definitely require close attention}.''

\paragraph{Our Results.}

\begin{table}
\caption{Our results compared to prior work on GNNs that simulate the WL test.}
\label{tbl:results}
    \centering
    {\renewcommand{\arraystretch}{1.3}
    \begin{tabular}{lcccc}
    \toprule
         Construction & Message size & Parameters in $\phi^{(k)}$ & Oblivious\textsuperscript{*} & Deterministic\textsuperscript{*} \\
         \midrule
         Xu et al.~\cite{xu2019} & $O(n)$ & $\Omega(2^n)$ & Yes & Yes \\
         Morris et al.~\cite{morris2019wl} & $O(n)$ & $\poly(n)$ & No & Yes \\
         \midrule
         This paper & $O(\log n)$ & $\polylog (n)$ & Yes & No \\
 \bottomrule
    \end{tabular}
    }
\begin{flushleft}
\begin{footnotesize}
\smallskip
\textsuperscript{*} Oblivious designs use the same weights for any input graph. Non-deterministic constructions require some weights to be assigned by random samples drawn from some distribution, and may err with small probability.
\end{footnotesize}
\end{flushleft}
    
    \label{tab:comparison}
\end{table}

The main question addressed in this work is: what is the simplest (in terms of the number of neural network units and message length) GNN capable of simulating the WL test?  Equivalently, at what point does a GNN become so small that it loses its expressive power?

Our main result is a highly efficient construction of a GNN architecture that is capable of simulating the WL test. For graphs with $n$ nodes, it can simulate $\mbox{poly}(n)$ steps of the WL test, such that the neural network implementing $\phi^{(k)}$ in each round has $\polylog (n)$ parameters, and the messages exchanged by the nodes of the GNN in each round consist of $O(\log n)$ bits. 
This offers at least an exponential improvement over the prior bounds obtained in~\cite{xu2019,morris2019wl} (see Table~\ref{tbl:results}), extending the equivalence between the WL test and the expressive power of GNNs to neural networks of reasonable (in fact, quite small) size. Furthermore, our architecture is simple, using vector sum for aggregation and ReLU units for the combine function $\phi^{(k)}$. {Finally, our construction can be generalized to yield a depth-size tradeoff: for any integer $t>0$, we can can construct a neural network of depth $O(t)$ and size $n^{O(1/t)} \polylog (n)$.}


To achieve this result, our construction is randomized, i.e., some weights of the neural networks are selected at random, and the simulation of the WL test is correct with high probability $1-1/\mbox{poly}(n)$. Thus, our construction can be viewed as creating a {\em distribution} over neural networks computing the function $\phi^{(k)}$.\footnote{Note that selecting $\phi^{(k)}$ at random is quite different from random node initialization, e.g., as investigated in~\cite{abboud2021surprising}. In particular, in our model all nodes use {\em the same} function $\phi^{(k)}$ (with the same parameters), without breaking the permutation invariance property of GNNs, as in the standard GNN model.} In particular, this implies that, for each graph, there exists a single neural network implementing $\phi^{(k)}$ that accurately simulates  WL on that graph.
The size of the network is exponentially smaller than in ~\cite{morris2019wl}, although the construction is probabilistic.




We complement this results with two lower bounds for executing a WL iteration. Our first lower bound addresses the {\em communication complexity} of this problem,  and demonstrates that to solve it,  each node must communicate labels that are at least $O(\log n)$ bits long, matching the upper bound achieved by our construction. Our second lower bound addresses the {\em computational complexity}, namely the parameters of the neural network. It shows that if the messages sent between nodes are vectors with entries in $[F]=\{0,1,\dots, F-1\}$, then the network implementing $\phi^{(k)}$ must use $\Omega(\log F)$ ReLU units.

\paragraph{Related work.} The equivalence between the discriminative power of GNNs and the WL test has been shown in the aforementioned works~\cite{xu2019,morris2019wl}. A strengthened version of the theorem of~\cite{xu2019}, where the same combine function $\phi$ is used in all iterations (i.e., $\phi=\phi^{(k)}$ for all $k$) appeared in the survey~\cite{grohe2021logic}. Many works since have studied various representational issues in GNNs; we refer the reader to excellent surveys~
 \cite{grohe2021logic,huang2021short,morris2021weisfeiler,jegelka2022theory}.
In particular,  \cite{loukas2019graph} established connections between GNNs and distributed computing models such as LOCAL and CONGEST, and derived lower bounds for several computational tasks based on this connection. 
\cite{chen2019equivalence} drew a connection between the expressiveness of GNNs in graph isomorphism testing and in function approximation. 
\cite{barcelo2020logical} studied the expressiveness of GNNs in computing Boolean node classifiers, and \cite{geerts2021let} studied the expressiveness of graph convolutional networks (GCNs). 

The emergence of WL as a barrier in GNN expressivity has also led to a flurry of work on enhancing their expressivity by means of more general architectures. These include higher-order GNNs inspired by higher-dimensional analogs of WL \cite{morris2019wl,maron2019provably}, unique node identifiers \cite{loukas2019graph,vignac2020building}, random node initializations \cite{abboud2021surprising,sato2021random}, relational pooling \cite{murphy2019relational}, incorporating additional information on the graph structure  \cite{nguyen2020graph,barcelo2021graph,bouritsas2022improving,cotta2021reconstruction,toenshoff2021graph}, and more. We refer to \cite{morris2021weisfeiler} for a comprehensive survey of this line of work.



\subsection{Preliminaries}\label{sec:prel}

\paragraph{Notation.} For the rest of the paper, we use $\mathcal N(v)$ to denote the neighborhood of $v$ in a graph $G(V,E)$ \emph{including $v$ itself}, and we use $\{\cdot\}$ to denote \emph{multisets} rather than sets.

\paragraph{GNNs.} Let $G = (V,E)$ be a graph with $N$ nodes. GNNs use the graph structure of $G$ to learn node embeddings for all the nodes across multiple iterations. Let $h_v^{(k)}$ denote the embedding vector of node $v \in V$ in the $k$th iteration. The vectors $h_v^{(0)}$ represent the initial node embeddings. In every iteration $k\geq1$, each node $v$ sends its current embedding $h_v^{(k-1)}$ to all its neighbors, and then computes its new embedding $h_v^{k}$ by the equation
\begin{equation}\label{eq:iteration}
    h_v^{(k)} = \phi^{(k)}\left(f\left(  \left \{h_w^{(k-1)} : w \in \mathcal{N}(v)  \right\}\right) , h_v^{(k-1)} \right)
\end{equation}
where $\phi^{(k)}$ is implemented by a neural network with ReLU activations 
(note that $\phi^{(k)}$ may differ across different iterations $k$). 
The function $f$ is called the `aggregate' function, and $\phi^{(k)}$ is called the `combine' function. 
The embeddings $h_v^{(K)}$ at the final iteration $K$ can be used for node classification. For graph classification, they can be aggregated into a graph embedding $h_G$ with a `readout' function, 
\[ h_G = \textup{READOUT}(\{h_v^{(L)} \mid v \in V \}). \]



\paragraph{Weisfeiler-Lehman (WL) Test.} The WL test \cite{wlpaper} is a popular heuristic for the graph isomorphism problem. While the exact complexity of this problem remains unknown \cite{babai2016graph}, the WL test is a powerful heuristic capable of distinguishing a large family of graphs \cite{babai1979canonical}.

The WL test is as follows. Given a graph $G(V,E)$, initially all nodes are given the same fixed label $\ell_v^{(0)}$, say $\ell_v^{(0)}=1$ for all $v\in V$. Then, in every iteration $k\geq1$, each node $v$ is assigned the new label $\ell_v^{(k)}=\text{HASH}(\{\ell_u^{(k-1)}:u\in\mathcal N(v)\})$, where $\text{HASH}$ is a 1-1 mapping (i.e., different multisets of labels are guaranteed to be hashed into distinct new labels).  
Performing this procedure on two graphs $G,G'$, the WL test declares them non-isomorphic if the label multiset $\{\ell_v^{(k)}:v\in V\}$ differs between the graphs at some iteration $k$. Note that the algorithm converges within at most $n$ iterations.

\paragraph{Simulating WL with GNNs.}
A GNN simulates WL deterministically if the node embeddings $h_v^{(k)}$ at each iteration $k$ constitute valid labels $\ell_v^{(k)}$ for the WL test. We also consider randomized simulation, where the GNN's weights are selected at random and we allow some small failure probability where distinct multisets of labels from iteration $k-1$ are hashed into the same label at iteration $k$. This is captured by the next definition. 


\begin{definition}[Successful Iteration of the WL Test]\label{def:one_iteration}
A WL iteration gets existing labels $\{h_v:v\in V\}$ for all nodes, and outputs new labels $\{h_v':v\in V\}$ given by
\[h_v' = \phi\left(f\left(\left\{h_w : w \in \mathcal{N}(v) \right\}\right)\right), \]
for an aggregate function $f$ and neural network $\phi$ with random weights. We say the iteration is \textbf{successful} if for all $v,u \in V$, the following holds:
\begin{itemize}
    \item If $\{h_w : w \in \mathcal{N}(v)\} = \{h_w : w \in \mathcal{N}(u)\}$ then $h_{v}' = h_{u}'$ with probability 1, and
    \item If $\{h_w : w \in \mathcal{N}(v)\} \ne \{h_w : w \in \mathcal{N}(u)\}$ then $h_{v}' \ne h_{u}'$ with probability $1-p$,
\end{itemize}
where the probability is over the choices of the random weights of $\phi$.
\end{definition}
To ensure the WL simulation is successful across $\mathrm{poly}(n)$ iterations and for all pairs of nodes, we can set failure probability $p$ to $1/\mathrm{poly}(n)$ and apply a union bound (see Prop. \ref{prop:union} in Appendix \ref{sec:omitted_intro}).



\subsection{Overview of Our Techniques}\label{sec:tech-overview}
In this section we give an overview of our GNN architectures for simulating WL. 
To explain our ideas in stages, we begin with a simpler construction of a polynomial size GNN. It is far larger than the ultimate polylogarithmic size we are aiming for, but forms a useful intermediate step toward our second and final construction.
\vspace{-2mm}
\paragraph{Construction 1 (\cref{sec:first_construction}).}
Recall that the $k$th WL iteration, for a node $v$, aggregates the labels of its neighbors from the previous iteration, $\mathcal H_v^{(k-1)}:=\{h_w^{(k-1)}:w\in\mathcal N(v)\}$,
and hashes them into a new label $h_v^{(k)}$ for $v$. Our GNNs aggregate by summing, i.e., they sum $\mathcal H_v^{(k-1)}$ into $\mathcal S_v^{(k)}:=\sum_{w\in\mathcal N(v)}h_w^{(k-1)}$, and then hash the sum into $h_v^{(k)}$ using a ReLU neural network of our choice.

If two nodes $u,v$ satisfy $\mathcal H_u^{(k-1)}\neq\mathcal H_v^{(k-1)}$, then WL assigns them distinct labels in iteration $k$, and therefore we want our construction to satisfy $h_u^{(k)}\neq h_v^{(k)}$ with probability at least $1-p$ (as per \cref{def:one_iteration}). This can fail in two places: either due to summing (if $\mathcal S_u^{(k)}=\mathcal S_v^{(k)}$ even though $\mathcal H_u^{(k-1)}\neq\mathcal H_v^{(k-1)}$) or due to hashing (if $h_u^{(k)}=h_v^{(k)}$ even though $\mathcal S_v^{(k)}\neq\mathcal S_v^{(k)}$).
To avoid the first failure mode (summing), we use one-hot encodings for the node labels, meaning that for every node $v$ and iteration $k$, $h_v^{(k)}$ is a one-hot vector in $\{0,1\}^F$ (for $F$ to be determined shortly). This ensures that if $\mathcal H_u^{(k-1)}\neq\mathcal H_v^{(k-1)}$ then $\mathcal S_u^{(k)}\neq\mathcal S_v^{(k)}$ due to the linear independence of the one-hot vectors. To handle the second failure mode (hashing), we use random hashing into $F=O(1/p)$ buckets, rendering the collision probability no more than $p$. That is, we let $h_v^{(k)}=\text{One-Hot}(g(\mathcal S_v^{(k)}))$ where $g:\Z^F\rightarrow\{1,\ldots,F\}$ 
is chosen at random from an (approximately) universal hash family. We use the simple hash function $g(x)=\langle a,x \rangle \; \mathrm{mod} \; F$ where $a$ is random vector. It can be implemented with a ReLU neural network, since the dot product is just a linear layer, and the mod operation can be implemented with constant width and logarithmic depth, by a reduction to the ``triangular wave'' construction due to Telgarsky~\cite{telgarsky2016}  (see \cref{sec:modF}). Finally, we show that turning the index of the hash bucket into a one-hot vector can be done with $O(F)$ ReLU units and constant depth.

Since our desired collision probability is $p=1/\mathrm{poly}(n)$, we set $F=O(1/p)=\mathrm{poly}(n)$. Since our one-hot vectors are of dimension $F$, the resulting GNN has width $F=\mathrm{poly}(n)$ (and depth $O(\log n)$ due to implementing the mod operation with ReLUs), and thus total size $\mathrm{poly}(n)$. 
\vspace{-2mm}
\paragraph{Construction 2 (\cref{sec:best_construction}).}
We now proceed to describe our polylogarithmic size construction. The weak point in the previous construction was the wasteful use of one-hot encoding vectors, which caused the width to be $F$. In the current construction, we still wish to hash into $F$ bins --- that is, to have $F$ distinct possible labels in each iteration --- but we aim to represent them using bitstrings of length $O(\log F)$, thus exponentially improving the width of the network. That is, for every node $v$ and iteration $k$, the label $h_v^{(k-1)}$ would now be a vector in $\{0,1\}^{O(\log F)}$. The challenge is again to avoid the two failure modes above, ensuring that the failure probability does not exceed $p$. Since we cannot use one-hot encoding, we need to devise another method to avoid the first failure mode, i.e., ensure that with high probability $\mathcal S_u^{(k)} \neq \mathcal S_v^{(k)}$ if $\mathcal H_u^{(k-1)}\neq\mathcal H_v^{(k-1)}$.

To this end, suppose for a moment that we had access to a truly random vector $a\in\{0,1\}^F$. Then each node $v$, instead of sending the one-hot encoding of its label $h_v^{(k-1)}$, we could instead send the dot product $\langle a , h_v^{(k-1)}\rangle$, which is the single bit
$a_{h_v^{(k-1)}}$. Each node $u$ thus receives the bits $\left\{a_{h_w^{(k-1)}}:w\in\mathcal{N}(w)\right\}$ from its neighbors and aggregates them into the sum $\sum_{w\in\mathcal N(u)} \langle a , h_w^{(k-1)}\rangle$, which, by linearity, is equal to $\langle a , \mathcal S_u^{(k)}\rangle$ (using the notation from Construction 1). It is easy to observe that if $\mathcal S_u^{(k)}\neq\mathcal S_v^{(k)}$ then $\langle a , \mathcal S_u^{(k)} \rangle \neq \langle a , \mathcal S_v^{(k)} \rangle$ with probability at least $0.5$. Repeating this process $\log F$ independent times decreases the collision probability to the requisite $1/F$. Thus, we can define a new labeling scheme $\bar h_v^{(k)}\in\{0,1\}^{\log F}$  that concatenates $\log F$ dot products with independent random vectors $a^1 \ldots a^{\log F}\in\{0,1\}^F$ as just described, failing at the summing operation with probability at most $1/F$. The second failure mode (hashing) can again be handled as before.

That catch is that, since $a$ has length $F$,  the overall number of parameters in the GNN would again be at least $F$. To avoid this, we appeal to the computational theory of pseudorandomness. The idea is to replace the random vector $a$ with an efficient pseudorandom analog. Note that the above approach goes through even if the probability that $\langle a , \mathcal S_u^{(k)} \rangle \neq \langle a , \mathcal S_v^{(k)} \rangle$ is slightly larger than $0.5$, say $0.5+\epsilon$ for a small constant $\epsilon$. It is well-known in complexity theory that there exist pseudo-random generators, called \emph{$\epsilon$-biased spaces}, that generate vectors $a$ satisfying this property given only $O(\log F + \log(1/\epsilon))$ truly random bits.\footnote{Technically, they guarantee this property only when $\mathcal S_u^{(k)}$ are binary vectors, but Lemma~\ref{lem:eps_biased} shows how to extend this property to general integer vectors as well.} Crucially,  each bit of $a$ can be computed using a threshold circuit with size polynomial in $O(\log F + \log(1/\epsilon))$ and constant depth (\cref{thm:tc0}), which translates to a ReLU neural network with the same parameters (\cref{lmm:thredhold}). Using these generators in our GNN to implement a pseudorandom ($\epsilon$-biased) analog of $a$ yields our final construction.

\section{First Construction: Polynomial-size GNN}\label{sec:first_construction}
Our first construction towards Definition \ref{def:one_iteration} is exponentially larger compared to our final optimized construction of Section \ref{sec:best_construction}. Nevertheless, it is instructive and motivates our optimized construction.

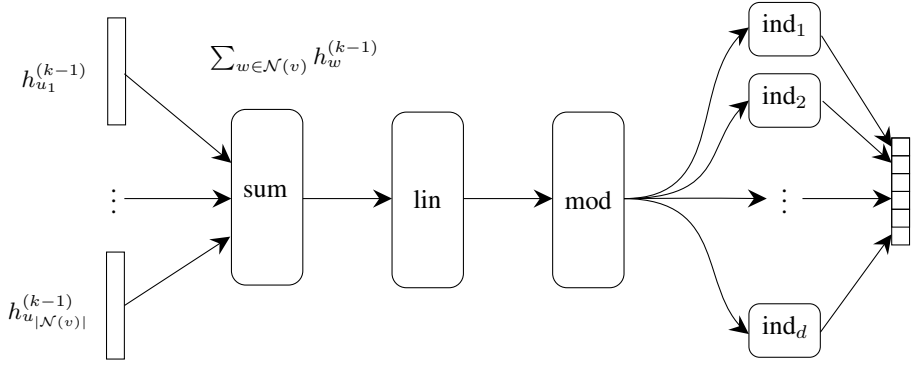
\begin{figure}
\centering
\begin{tikzpicture}[x=0.75pt,y=0.75pt,yscale=-0.9,xscale=0.9]


\draw   (160.67,58.67) -- (150.67,58.67) -- (150.67,118.67) -- (160.67,118.67) -- cycle ;
\draw   (220,118) .. controls (220,113.58) and (223.58,110) .. (228,110) -- (252,110) .. controls (256.42,110) and (260,113.58) .. (260,118) -- (260,200.5) .. controls (260,204.92) and (256.42,208.5) .. (252,208.5) -- (228,208.5) .. controls (223.58,208.5) and (220,204.92) .. (220,200.5) -- cycle ;

\draw   (160,190) -- (150,190) -- (150,250) -- (160,250) -- cycle ;
\draw   (600,126) -- (590,126) -- (590,186) -- (600,186) -- cycle ;
\draw   (600,126) -- (590,126) -- (590,136) -- (600,136) -- cycle ;
\draw   (600,136) -- (590,136) -- (590,146) -- (600,146) -- cycle ;
\draw   (600,146) -- (590,146) -- (590,156) -- (600,156) -- cycle ;
\draw   (600,156) -- (590,156) -- (590,166) -- (600,166) -- cycle ;
\draw   (600,166) -- (590,166) -- (590,176) -- (600,176) -- cycle ;
\draw   (510,56) .. controls (510,52.69) and (512.69,50) .. (516,50) -- (544,50) .. controls (547.31,50) and (550,52.69) .. (550,56) -- (550,74) .. controls (550,77.31) and (547.31,80) .. (544,80) -- (516,80) .. controls (512.69,80) and (510,77.31) .. (510,74) -- cycle ;

\draw   (510,225) .. controls (510,221.69) and (512.69,219) .. (516,219) -- (544,219) .. controls (547.31,219) and (550,221.69) .. (550,225) -- (550,243) .. controls (550,246.31) and (547.31,249) .. (544,249) -- (516,249) .. controls (512.69,249) and (510,246.31) .. (510,243) -- cycle ;

\draw   (510,96) .. controls (510,92.69) and (512.69,90) .. (516,90) -- (544,90) .. controls (547.31,90) and (550,92.69) .. (550,96) -- (550,114) .. controls (550,117.31) and (547.31,120) .. (544,120) -- (516,120) .. controls (512.69,120) and (510,117.31) .. (510,114) -- cycle ;

\draw    (550.67,68.67) -- (588.4,128.46) ;
\draw [shift={(590,131)}, rotate = 237.75] [fill={rgb, 255:red, 0; green, 0; blue, 0 }  ][line width=0.08]  [draw opacity=0] (10.72,-5.15) -- (0,0) -- (10.72,5.15) -- (7.12,0) -- cycle    ;
\draw    (550,235) -- (588.24,182.43) ;
\draw [shift={(590,180)}, rotate = 126.03] [fill={rgb, 255:red, 0; green, 0; blue, 0 }  ][line width=0.08]  [draw opacity=0] (10.72,-5.15) -- (0,0) -- (10.72,5.15) -- (7.12,0) -- cycle    ;
\draw    (551.33,105.33) -- (587.77,138) ;
\draw [shift={(590,140)}, rotate = 221.88] [fill={rgb, 255:red, 0; green, 0; blue, 0 }  ][line width=0.08]  [draw opacity=0] (10.72,-5.15) -- (0,0) -- (10.72,5.15) -- (7.12,0) -- cycle    ;
\draw    (440,160) .. controls (498.8,160.33) and (475.65,93.43) .. (507.95,65.65) ;
\draw [shift={(510,64)}, rotate = 142.96] [fill={rgb, 255:red, 0; green, 0; blue, 0 }  ][line width=0.08]  [draw opacity=0] (10.72,-5.15) -- (0,0) -- (10.72,5.15) -- (7.12,0) -- cycle    ;
\draw    (440,160) .. controls (498.2,160.32) and (484.89,128.66) .. (507.77,106.98) ;
\draw [shift={(510,105)}, rotate = 140.19] [fill={rgb, 255:red, 0; green, 0; blue, 0 }  ][line width=0.08]  [draw opacity=0] (10.72,-5.15) -- (0,0) -- (10.72,5.15) -- (7.12,0) -- cycle    ;
\draw    (440,160) .. controls (498.5,160.33) and (481.43,210.87) .. (507.88,234.26) ;
\draw [shift={(510,236)}, rotate = 217.33] [fill={rgb, 255:red, 0; green, 0; blue, 0 }  ][line width=0.08]  [draw opacity=0] (10.72,-5.15) -- (0,0) -- (10.72,5.15) -- (7.12,0) -- cycle    ;
\draw    (440,160) .. controls (505.33,159.67) and (473.99,159.67) .. (517.26,159.98) ;
\draw [shift={(520,160)}, rotate = 180.4] [fill={rgb, 255:red, 0; green, 0; blue, 0 }  ][line width=0.08]  [draw opacity=0] (10.72,-5.15) -- (0,0) -- (10.72,5.15) -- (7.12,0) -- cycle    ;
\draw    (540,160) -- (587,160) ;
\draw [shift={(590,160)}, rotate = 180] [fill={rgb, 255:red, 0; green, 0; blue, 0 }  ][line width=0.08]  [draw opacity=0] (10.72,-5.15) -- (0,0) -- (10.72,5.15) -- (7.12,0) -- cycle    ;
\draw    (260,160) -- (307,160) ;
\draw [shift={(310,160)}, rotate = 180] [fill={rgb, 255:red, 0; green, 0; blue, 0 }  ][line width=0.08]  [draw opacity=0] (10.72,-5.15) -- (0,0) -- (10.72,5.15) -- (7.12,0) -- cycle    ;
\draw    (160,90) -- (217.7,138.08) ;
\draw [shift={(220,140)}, rotate = 219.81] [fill={rgb, 255:red, 0; green, 0; blue, 0 }  ][line width=0.08]  [draw opacity=0] (10.72,-5.15) -- (0,0) -- (10.72,5.15) -- (7.12,0) -- cycle    ;
\draw    (160,220) -- (216.82,182.97) ;
\draw [shift={(219.33,181.33)}, rotate = 146.91] [fill={rgb, 255:red, 0; green, 0; blue, 0 }  ][line width=0.08]  [draw opacity=0] (10.72,-5.15) -- (0,0) -- (10.72,5.15) -- (7.12,0) -- cycle    ;
\draw    (160,160) -- (217,160) ;
\draw [shift={(220,160)}, rotate = 180] [fill={rgb, 255:red, 0; green, 0; blue, 0 }  ][line width=0.08]  [draw opacity=0] (10.72,-5.15) -- (0,0) -- (10.72,5.15) -- (7.12,0) -- cycle    ;
\draw   (310,119.5) .. controls (310,115.08) and (313.58,111.5) .. (318,111.5) -- (342,111.5) .. controls (346.42,111.5) and (350,115.08) .. (350,119.5) -- (350,202) .. controls (350,206.42) and (346.42,210) .. (342,210) -- (318,210) .. controls (313.58,210) and (310,206.42) .. (310,202) -- cycle ;
\draw    (350,160) -- (397,160) ;
\draw [shift={(400,160)}, rotate = 180] [fill={rgb, 255:red, 0; green, 0; blue, 0 }  ][line width=0.08]  [draw opacity=0] (10.72,-5.15) -- (0,0) -- (10.72,5.15) -- (7.12,0) -- cycle    ;
\draw   (400,119.5) .. controls (400,115.08) and (403.58,111.5) .. (408,111.5) -- (432,111.5) .. controls (436.42,111.5) and (440,115.08) .. (440,119.5) -- (440,202) .. controls (440,206.42) and (436.42,210) .. (432,210) -- (408,210) .. controls (403.58,210) and (400,206.42) .. (400,202) -- cycle ;

\draw (141,82.4) node [anchor=north east] [inner sep=0.75pt]  [font=\footnotesize]  {$h_{u_{1}}^{( k-1)}$};
\draw (142,212.4) node [anchor=north east] [inner sep=0.75pt]  [font=\footnotesize]  {$h_{u_{|\mathcal{N}( v) |}}^{( k-1)}$};
\draw (255.5,81) node  [font=\footnotesize]  {$\sum _{w\in \mathcal{N}( v)} h_{w}^{( k-1)}$};
\draw (530,157) node    {$\vdots $};
\draw (516,55.4) node [anchor=north west][inner sep=0.75pt]    {$\text{ind}_{1}$};
\draw (516,224.4) node [anchor=north west][inner sep=0.75pt]    {$\text{ind}_{d}$};
\draw (516,95.4) node [anchor=north west][inner sep=0.75pt]    {$\text{ind}_{2}$};
\draw (225.23,151) node [anchor=north west][inner sep=0.75pt]   [align=left] {sum};
\draw (154,157) node    {$\vdots $};
\draw (329.92,160.5) node   [align=left] {lin};
\draw (420,160.75) node   [align=left] {mod};
\end{tikzpicture}

\caption{Constructions of our GNNs. Construction 1 (section~\ref{sec:first_construction}): The layer ``sum'' sums the input vectors $\{h_w^{(k-1)}:w\in\mathcal N(v)\}$. ``lin'' is a linear layer that computes the dot product with a random vector and outputs a single scalar. ``mod'' is a neural network that computes the input scalar modulo $F$, using Theorem~\ref{thm:modF}. Each ``ind$_i$'' is a neural network that outputs $1$ if its input equals $i$ and outputs 0 otherwise, using eq.~\eqref{eq:onehot}. The $i$th coordinate of the output vector $h_v^{(k)}$ equals the output of ind$_i$. Construction 2 (section~\ref{sec:best_construction}): Similar, except that now each ind$_i$ is a neural network that gets an input integer $j\in[F]$ and outputs the $j$th coordinate of a vector $a^i$ sampled from an $\epsilon$-biased space, using Theorem~\ref{thm:tc0}. This reduces the total number of units of the network from $\mathrm{poly}(n)$ to $\polylog(n)$. \vspace{-4mm}}
\end{figure}

Let $h_u^{(k)}$ denote the label of a vertex $u$ in the $k$th iteration. For our first construction, we will always maintain the invariant that $h_u^{(k)}$ will be a one-hot encoded vector in $\{0,1\}^{F}$ for all $u \in V$ and all iterations $k$. $F > 2n$ will be a prime which also satisfies $F = O(\poly(n))$. As stated previously, the aggregate function $f$ will just be the sum function. Our construction for the neural network used in the $k$th iteration, $\phi^{(k)}$, will take in the sum of the neighbors labels according to Equation \eqref{eq:iteration} and output a one-hot encoded vector in $\{0,1\}^F$.
 
 \paragraph{Implementation of Neural Network.}
Our construction for $\phi^{(k)}$ is the following: First recall the notation from (simplified) Equation \ref{eq:iteration}:
\[ h_v^{(k)} = \phi^{(k)}\left(f\left(  \left \{h_w^{(k-1)} : w \in \mathcal{N}(v) \right\}\right) \right).\]
 \begin{enumerate}[leftmargin=*]
 \item  For every node $v$, the input to $\phi^{(k)}$ is the sum of feature vectors of neighbors from the prior iteration, $\sum_{w \in \mathcal{N}(v)} h_w^{(k-1)}$, which is returned by the summing function $f$. Given any such input $x \in \Z^F$, $\phi^{(k)}$ computes the inner product of $x$ with a vector $a \in \Z^F$ where each entry of $a$ is a uniformly random integer in $[F] := \{0, \ldots, F-1\}$. 
\item $\phi^{(k)}$ then computes $\langle x, a \rangle \bmod F$. 
 \item Finally, we represent the value of $z = \langle x, a\rangle$ as a one-hot encoded vector in $\Z^F$ where the $z$-th entry is equal to $1$ and all other entries are $0$'s.
 \end{enumerate}
Altogether, $\phi^{(k)}$ can be summarized as: $h_v^{(k)} = \text{One-Hot} \left( \left \langle \sum_{w \in \mathcal{N}(v)} h_w^{(k-1)}, a \right \rangle \bmod F \right)$.
 
Note that we set the initial labels $h_u^{(0)}$ to be the same starting vector for all vertices (any one-hot vector). This matches the WL test which also initializes all nodes with the same initial label. Furthermore, the weights of $\phi^{(k)}$ are independent: the random vector $a$ is sampled independently for each iteration.

\paragraph{Correctness of Construction.}
The following lemma proves that the above construction satisfies the requirement of Definition \ref{def:one_iteration}. Its proof is given in Appendix \ref{sec:omitted_proofs_ub1}.
 
\begin{lemma}\label{lem:first_construction}
Let $\{ h_w^{(k-1)} : w \in \mathcal{N}(v)\}$ and  $\{ h_w^{(k-1)} : w \in \mathcal{N}(u)\}$ denote the multiset of neighborhood labels for vertices $v$ and $u$ respectively. If the multisets are distinct then the labels computed for $v$ and $u$ in the $k$th iteration are the same with probability at most $O(1/F)$. If the multisets are the same then the labels are the same, i.e., the $k$th iteration is successful according to Definition \ref{def:one_iteration}.
\end{lemma}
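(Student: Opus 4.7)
The plan is to prove both halves of \cref{def:one_iteration} using the key observation that summing one-hot vectors is injective on multisets, which reduces the entire analysis to a standard inner-product hash over the field $\mathbb{Z}/F\mathbb{Z}$.

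For the ``equal multisets'' case: since each $h_w^{(k-1)}$ is a one-hot vector in $\{0,1\}^F$, the sum $x_v := \sum_{w \in \mathcal{N}(v)} h_w^{(k-1)}$ is the ``histogram'' whose $i$th coordinate equals the multiplicity of the $i$th standard basis vector in the multiset $\{h_w^{(k-1)} : w \in \mathcal{N}(v)\}$. Hence if the two neighborhood multisets are equal, then $x_v = x_u$ as integer vectors. Since the same realization of the random vector $a$ is used for both $v$ and $u$, the outputs $\phi^{(k)}(x_v)$ and $\phi^{(k)}(x_u)$ agree identically, i.e., $h_v^{(k)} = h_u^{(k)}$ with probability $1$.

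For the ``distinct multisets'' case: the same histogram observation gives $x_v \neq x_u$ in $\mathbb{Z}^F$. A collision $h_v^{(k)} = h_u^{(k)}$ occurs if and only if $\langle x_v, a\rangle \equiv \langle x_u, a\rangle \pmod F$, equivalently $\langle y, a\rangle \equiv 0 \pmod F$ where $y := x_v - x_u \neq 0$. Pick any coordinate $j$ with $y_j \neq 0$. Each entry of $x_v$ and $x_u$ counts neighbors and thus lies in $\{0,1,\dots,n\}$; hence $|y_j| \leq n < F/2$, and in particular $y_j \not\equiv 0 \pmod F$. Since $F$ is prime, $y_j$ is invertible in $\mathbb{Z}/F\mathbb{Z}$. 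Conditioning on the coordinates $\{a_i : i \neq j\}$, the congruence $\langle y, a\rangle \equiv 0 \pmod F$ reduces to the single linear condition $a_j \equiv -y_j^{-1} \sum_{i \neq j} y_i a_i \pmod F$. Since $a_j$ is uniform on $[F]$ independently of the other coordinates, this event has probability exactly $1/F = O(1/F)$.

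The only nontrivial design choices that make this argument go through are the requirement $F > 2n$ (so that no single coordinate of $y$ can vanish modulo $F$) and the primality of $F$ (so that $y_j$ is invertible). Both are baked into the construction, so there is no real obstacle beyond organizing the two cases cleanly; the proof is essentially a textbook application of inner-product hashing over a prime field, with the one-hot encoding ensuring that the aggregation step itself preserves multiset information.
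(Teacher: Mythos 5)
Your proof is correct and follows essentially the same approach as the paper's: reduce the multiset comparison to the integer histogram vectors via one-hot encoding, and then bound the collision probability of the random inner-product hash modulo the prime $F$. If anything, your version is slightly cleaner — by directly conditioning on $\{a_i : i \neq j\}$ and using the invertibility of $y_j$ in $\mathbb{Z}/F\mathbb{Z}$, you obtain the exact bound $1/F$, whereas the paper first conditions on $a_1 \not\equiv 0$, an unnecessary step that only loosens the bound to $O(1/F)$.
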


 
\paragraph{Complexity of the GNN.}
We now evaluate the size complexity of implementing our construction via a neural network $\phi$. Note that Step $1$ of the construction can be done with $1$ layer as it simply involves taking an inner product. The main challenge is to implement the modulo function. We give the following construction in Section \ref{sec:modF} of the appendix.

\begin{restatable}{theorem}{modF}
\label{thm:modF}
Suppose $F = \poly(n)$. There exists an explicit construction of a network which computes modulo $F$ in the domain $ \{0, \ldots, nF \}$ using a ReLU network with $O(\log n)$ hidden units and $O(\log n)$ depth. {More generally, given an integer parameter $t>0$, the function can be computed with $O((Fn)^{O(1/t)} \log n) $ hidden units and $O(t)$ depth.}
\end{restatable}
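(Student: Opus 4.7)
The plan is to compute $x \bmod F$ on $\{0, 1, \ldots, nF\}$ by a binary reduction on $x$. Since $x \leq nF < 2^k F$ for $k := \lceil \log_2(n+1) \rceil$, we iteratively ``peel off'' one bit of the quotient per round. Set $x_k := x$ and for $i = k-1, k-2, \ldots, 0$ define
\[
 x_i := x_{i+1} - 2^i F \cdot \mathbf{1}\!\left[x_{i+1} \geq 2^i F\right].
\]
A straightforward induction shows $x_i \in [0, 2^i F) \cap \mathbb{Z}$ and $x_i \equiv x \pmod{F}$; thus $x_0 = x \bmod F$ is the desired output.

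The key subroutine is an exact ReLU implementation of the integer-input indicator. For $x, c \in \mathbb{Z}$, I would use the two-ReLU identity
\[
 \mathrm{ind}_c(x) := \mathrm{ReLU}(x - c + 1) - \mathrm{ReLU}(x - c),
\]
which equals exactly $1$ when $x \geq c$ and exactly $0$ when $x \leq c - 1$ (checked directly in the three cases $x \leq c-1$, $x = c$, $x > c$). Each round of the reduction then uses two ReLUs to form $\mathrm{ind}_{2^i F}(x_{i+1})$, one pass-through ReLU for $x_{i+1}$ (a no-op since $x_{i+1} \geq 0$), and a linear combination to produce $x_i$; this costs $O(1)$ width and $O(1)$ depth per round. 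Summing over $k = O(\log n)$ rounds yields total size $O(\log n)$ and depth $O(\log n)$, establishing the first part of the theorem.

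For the depth--size tradeoff, I would instead run a $b$-ary reduction with $b := \lceil (nF)^{1/t}\rceil$, so that only $K := \lceil \log_b(nF)\rceil = O(t)$ rounds are needed. In round $i$, the quantity to subtract, $\lfloor x_{i+1}/(b^i F)\rfloor \cdot b^i F$, expands as a sum of $b-1$ exact indicators,
\[
 \sum_{j=1}^{b-1} b^i F \cdot \mathbf{1}\!\left[x_{i+1} \geq j \cdot b^i F\right],
\]
so each round costs $O(b)$ width in constant depth. Multiplying across rounds and noting we may assume $t \leq \log n$ (else the binary construction above already meets the bound) gives total size $O(K b) = O((nF)^{O(1/t)} \log n)$ and depth $O(t)$. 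The main (and essentially only) obstacle is to certify that integrality of the running value is preserved so that the exact indicator identity remains valid round after round; this is immediate because $F$ and all thresholds are integers and the indicator outputs are exactly in $\{0,1\}$, so no error accumulates across layers.
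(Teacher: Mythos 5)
Your proof is correct, and it takes a genuinely different route from the paper's. The paper builds $x \bmod F$ out of Telgarsky's triangular-wave gadget $\TW_F$: it invokes his Lemma 3.8/Corollary 3.9 as a black box to compute $\TW_F(z)$ and $\TW_{F/2}(z)$, then uses a case analysis on where $z$ falls within $[0,2F)$ and the identity $z \bmod F = \lvert F\cdot g(z) - \TW_F(z)\rvert$ (with $\lvert x\rvert = \mathrm{ReLU}(2x)-x$) to extract the residue. You instead compute the residue from first principles by a base-$b$ long-division: at each of $O(\log_b n)$ rounds you subtract the appropriate multiple of $b^iF$, and the quotient digit is produced exactly by the two-ReLU step indicator $\mathrm{ReLU}(x-c+1)-\mathrm{ReLU}(x-c)$, which is correct on integers because integrality is preserved by all thresholds and outputs. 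Your route is more elementary and self-contained (no appeal to an external triangular-wave construction), and it makes the depth--size tradeoff transparent: the paper's proof only cites Telgarsky's $O(\log n)$-depth bound and does not actually derive the claimed $O(t)$-depth, $O((nF)^{O(1/t)}\log n)$-size generalization, whereas your $b$-ary reduction with $b=\lceil (nF)^{1/t}\rceil$ yields that tradeoff directly (using, as you note, that one may assume $t \le \log n$ so $Kb = O(t(nF)^{1/t}) = O((nF)^{O(1/t)}\log n)$). Both arguments use $O(\log n)$ hidden units and depth in the base case and rely on the same architectural facts (linear layers plus ReLU, exactness on integer inputs), but yours replaces the paper's reduction-to-a-known-gadget with a direct digit-extraction circuit.
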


Directly appealing to the theorem above, we can implement modulo $F$ required in Step $2$ of the construction using a neural network with $O(\log n)$ units, depth $O(\log n)$. In addition, we need only $O(\log n)$ bits to represent the weights.


Finally, Step $3$ of our construction requires outputting a one hot encoding. We can do this by inputting $z$ (the output of Step $2$ of the construction) into $F$ indicator functions, each of which detect if $z$ is equal to a particular integer in $[F]$. Each indicator function can be implemented via $O(1)$ ReLU nodes as follows. Let 
\begin{equation}\label{eq:onehot}
g(x) = \text{ReLU}(\min(-\text{ReLU}(2x-1/2)+1, \text{ReLU}(2x+1)))
\end{equation}
which can be easily implemented as a ReLU network. (Note $\min(a,b) = a+b-\max(a,b)$ and $\max(a,b) = \max(a-b,0)+b = \text{ReLU}(a-b)+b$.) It can be  checked that $g(0) = 1$ and $g(x) = 0$ for all other integers $x \ne 0$ and that $g$ can be implemented with $O(1)$ ReLU function compositions. Thus, Step $3$ of the construction requires $O(1)$ hidden layers and $O(F)$ total hidden units. Altogether we have proven the following result.

\begin{theorem}\label{thm:construction_1}
There exists a construction of a neural network $\phi$ which performs a successful iteration according to Definition \ref{def:one_iteration} with failure probability $p = O(1/|F|)$. $\phi$ has depth $O(\log n)$, $O(F)$ hidden units, and requires $O(\log(nF))$ bits of precision.
 Furthermore, all labels in all iterations are vectors in $\{0,1\}^{F}$.
{More generally, given an integer parameter $t>0$, the function can be computed with $n^{O(1/t)} \polylog (n)$ hidden units and $O(t)$ depth.}
\end{theorem}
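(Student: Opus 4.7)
The plan is to assemble the theorem by directly combining the correctness lemma (Lemma \ref{lem:first_construction}) with a layer-by-layer complexity analysis of the three-stage construction described above: (i) the inner product $\langle x,a\rangle$, (ii) the reduction modulo $F$, and (iii) the one-hot encoding. Correctness (failure probability $O(1/F)$) and the fact that labels live in $\{0,1\}^F$ are immediate from Lemma \ref{lem:first_construction} and the construction itself, since each $h_v^{(k)}$ is explicitly produced as a one-hot vector by Step 3. So the real content of the proof is the size/depth/precision bookkeeping.

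For the complexity bounds, I would go through the steps in order. Step 1 takes the sum $\sum_{w\in\mathcal N(v)} h_w^{(k-1)}\in\Z^F$, which is itself a single vector addition performed by the aggregate function $f$, and then the inner product with the random vector $a\in[F]^F$. This costs one linear layer of width $F$; since every entry of the sum lies in $\{0,1,\ldots,n\}$ and every entry of $a$ lies in $[F]$, the output $\langle x,a\rangle$ is an integer in $\{0,\ldots,nF\}$, so $O(\log(nF))$ bits of precision suffice here and throughout. Step 2 applies Theorem \ref{thm:modF} to reduce this integer modulo $F$; in the default regime this contributes $O(\log n)$ depth and $O(\log n)$ hidden units. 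Step 3 feeds the result $z\in[F]$ through $F$ parallel copies of the gadget $g$ from \eqref{eq:onehot}, each shifted by an integer in $[F]$; since each $g$ uses $O(1)$ ReLUs and $O(1)$ depth, this contributes $O(1)$ depth and $O(F)$ hidden units. Summing the three contributions yields depth $O(\log n)$ and total width $O(F)$, with all weights representable in $O(\log(nF))$ bits.

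For the depth-size tradeoff, I would invoke the general form of Theorem \ref{thm:modF} with parameter $t$, which implements the mod-$F$ step in depth $O(t)$ using $O((Fn)^{O(1/t)}\log n)$ units. Since $F=\mathrm{poly}(n)$, this simplifies to $n^{O(1/t)}\polylog(n)$. The inner product still fits in a single linear layer of size $F$, but for the tradeoff statement it would be packaged into the depth-$O(t)$ budget; the one-hot Step 3 must also be charged at $O(F)$ units. To drop the final bound to $n^{O(1/t)}\polylog(n)$ overall as claimed, the natural reading is that in the tradeoff regime one replaces the one-hot output representation with the compressed representation that is the whole point of Construction 2 in Section \ref{sec:best_construction}; I would note this pointer and defer to that construction for eliminating the $O(F)$ factor, while emphasizing that Construction 1 already establishes the $O(\log n)$-depth, $O(F)$-width, $p=O(1/F)$ claim in full.

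The only mildly delicate step is ensuring the precision bookkeeping is consistent: each input entry is a bit, the aggregate sum has entries bounded by $n$, the inner product is bounded by $nF$, and after the mod it drops back to $[F]$, so every intermediate value fits in $O(\log(nF))$ bits and this is the bit-complexity the weights of the linear layer and of the mod-$F$ subnetwork inherit from Theorem \ref{thm:modF}. Other than this accounting, no new ideas are needed beyond citing Lemma \ref{lem:first_construction} for correctness and Theorem \ref{thm:modF} for the mod gadget.
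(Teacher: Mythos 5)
Your proof is correct and follows the paper's own argument essentially verbatim: correctness and the one-hot label form come directly from Lemma~\ref{lem:first_construction}; the inner product is one linear layer; the mod-$F$ step invokes Theorem~\ref{thm:modF}; the one-hot encoding uses $F$ copies of the $O(1)$-ReLU indicator gadget from~\eqref{eq:onehot}; and the $O(\log(nF))$-bit precision bound follows because all intermediate quantities are integers in $\{0,\dots,nF\}$. Your caveat that Step~3's one-hot encoding always costs $O(F)$ units---so that the ``more generally'' tradeoff bound of $n^{O(1/t)}\polylog(n)$ units should really carry an additional $O(F)$ term unless one substitutes the compressed labels of Construction~2---is a fair and careful reading of a point the paper leaves imprecise in Theorem~\ref{thm:construction_1}.
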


\begin{remark}
In the standard WL test, the number of iterations is chosen to be $O(n)$. Thus the right setting of $F$ in Theorem \ref{thm:construction_1} is $F = O(\poly(n))$ which gives us depth $O(\log n)$, $O(\poly(n))$ hidden units, and requires $O(\log n)$ bits of precision in addition to labels in dimension $O(\poly(n))$.
\end{remark}

\section{Second Construction: Polylogarithmic-size GNN via Pseudo-randomness}\label{sec:best_construction}
We now present a more efficient construction of a GNN which simulates the WL test with an exponential improvement in the number of hidden units and label size. To motivate the improvement, we consider Step $3$ of the prior construction which outputs a one-hot encoding. The one-hot encoding was useful as it allowed us to index into a uniformly random vector $a$ (which we then sum over mod $F$ in order to hash the neighborhood's labels). However, this limited us to use feature vectors of a large dimension and required many hidden units to create one-hot vectors. Instead of working with one-hot encodings as an intermediary, we will directly compute the entries of the random vector $a$ as needed. This has two advantages: we can significantly reduce the dimension of the feature vectors as well as reduce the total size of the neural networks used. We accomplish this via using \emph{pseudo-random} vectors whose entries can be generated as needed with a small ReLU neural network (see Corollary \ref{cor:eps_biased}). This allows us to use node labels in dimension $O(\log n)$ as opposed to $O(n)$.


The random vectors we employ have their entries generated from an $\eps$-biased sample space. These are random vectors which are approximately uniform and they have been well-studied in the complexity-theory literature. We recall some definitions below.

\begin{definition}[Bias]
Let $X$ be a probability distribution over $\{0,1\}^m$. The bias of $X$ with respect to a set of indices $I \subseteq \{1, \ldots, m\}$ is defined as $\textup{bias}_I(X) = \left | \BP_{x \sim X}\left[\sum_{i \in I}x_i = 0 \right] -  \BP_{x \sim X}\left[\sum_{i \in I}x_i = 1 \right]  \right |$
where each sum is taken modulo $2$ and the empty sum is defined to be $0$.
\end{definition}

\begin{definition}[$\eps$-biased Sample Space]
A probability distribution over $\{0,1\}^m$ is called an $\eps$-biased sample space if $\textup{bias}_I(X) \le \eps$ holds for all non-empty subsets $I \subseteq  \{1, \ldots, m\}$.
\end{definition}
Note that the uniform distribution has bias $0$. We now state our construction for the neural network used in the $k$th iteration, $\phi^{(k)}$. We recall that $h_u^{(k)}$ denotes the label of a vertex $u$ in the $k$th iteration.

\paragraph{Implementation of Neural Network.}
Our construction for $\phi^{(k)}$ is the following: 

\begin{enumerate}[leftmargin=*]
    \item The input vectors of $\phi^{(k)}$, which are of the form $h_v^{(k-1)}$ for the $k$th iteration, are each assumed to be a feature vector in $\Z^{C \log n}$ for a sufficiently large constant $C$. The output of $\phi^{(k)}$ will also be a feature vector in $\Z^{C \log n}$. Our aggregation function $f$ will again be the summation function.
    \item Let $F$ be a prime of size $\poly(n)$ which is at least $2n$.
    \item For each node $v$, $\phi^{(k)}$ computes $z_v = \langle b, \sum_{w \in \mathcal{N}(v)} h_w^{(k-1)} \rangle \bmod F$ where every entry of $b$ is uniformly random in $\{0, \ldots, F-1\}$. Note $\sum_{w \in \mathcal{N}(v)} h_w^{(k-1)}$ is the output of the aggregation $f$.
    \item Let $a^t \in \{0,1\}^F$ for $t=1, \ldots, C \log n$ be vectors which are independently drawn from an $\eps$-biased sample space for a sufficiently small constant $\eps$. 
    \item The output $h_v^{(k)}$ will be a $C \log n$ dimensional binary vector where the $t$-th coordinate is equal to the $z_v$-th coordinate of the vector $a^t$. In other words,  $h_v^{(k)} = (a^t(z_v))_{t = 1}^{C \log n}$ where $a^t(z_v)$ denotes the $z_v$-th coordinate of $a^t$.
\end{enumerate}

\paragraph{Correctness of Construction.}
We now prove the correctness of our construction. We will refer to $z_v$ computed in Step $3$ of the construction as the \emph{index} of $v$ for the $k$th iteration. To prove the correctness of the above construction, it suffices to prove the lemma below which shows our construction satisfies Definition \ref{def:one_iteration}.

\begin{lemma}\label{lem:gnn_prg}

Let $\{ h_w^{(k-1)} : w \in \mathcal{N}(v)\}$ and  $\{ h_w^{(k-1)} : w \in \mathcal{N}(u)\}$ denote the multiset of neighborhood labels for vertices $v$ and $u$ respectively. If the multisets are distinct then the labels computed for $v$ and $u$ in the $k$th iteration are distinct with probability $1-1/\poly(n)$. If the multisets are the same then the labels are the same, i.e., the $k$th iteration is successful according to Definition \ref{def:one_iteration}.
\end{lemma}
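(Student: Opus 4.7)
The plan is to verify the two bullets of Definition~\ref{def:one_iteration} separately. For the equal-multiset direction, observe that the aggregate $S_v = \sum_{w \in \mathcal N(v)} h_w^{(k-1)}$ depends only on the multiset of inputs, so $S_v = S_u$ in $\mathbb{Z}^{C\log n}$, which forces $z_v = \langle b, S_v\rangle \bmod F$ to equal $z_u$ and the outputs $(a^t(z_v))_{t=1}^{C\log n}$ and $(a^t(z_u))_{t=1}^{C\log n}$ to coincide with probability $1$. The real work is showing that distinct input multisets yield distinct outputs with probability $1 - 1/\poly(n)$.

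For the distinct-multiset case, I would decompose the failure event $\{h_v^{(k)} = h_u^{(k)}\}$ into three disjoint sub-events. Event (B): $z_v \neq z_u$ but $(a^t(z_v))_t = (a^t(z_u))_t$. For any fixed distinct pair $z_v, z_u \in [F]$, applying the definition of bias to the index set $\{z_v, z_u\}$ gives $\Pr[a^t(z_v) = a^t(z_u)] \leq 1/2 + \epsilon/2$; independence across the $C\log n$ copies $a^1,\ldots,a^{C\log n}$ bounds (B) by $(1/2+\epsilon/2)^{C\log n} = 1/\poly(n)$ for $C$ large and $\epsilon$ small. Event (A2): $S_v \not\equiv S_u \pmod F$ in some coordinate but still $z_v = z_u$. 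Since $F$ is prime and $b$ is uniform in $[F]^{C\log n}$, the residue $\langle b, S_v - S_u\rangle \bmod F$ is uniform over $[F]$, so $\Pr[(A2)] \leq 1/F = 1/\poly(n)$.

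The main obstacle is event (A1): $S_v \equiv S_u \pmod F$ componentwise, in which case $z_v = z_u$ deterministically regardless of $b$, and the current iteration's randomness offers no help. I would unpack the pseudorandom structure of the inputs: each label produced in iteration $k-1$ has the form $h_w^{(k-1)} = (a^{t,(k-1)}(z_w^{(k-1)}))_{t=1}^{C\log n}$, so the $t$-th coordinate of $S_v$ equals $\langle a^{t,(k-1)}, c_v\rangle$, where $c_v \in \mathbb{Z}^F$ is the histogram on $[F]$ of the indices of $v$'s neighbors. Distinct input multisets force $c_v \neq c_u$, and since every entry of $c_v - c_u$ lies in $[-n, n]$ while $F > 2n$, we get $c_v - c_u \not\equiv 0 \pmod F$ componentwise. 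I would then invoke Lemma~\ref{lem:eps_biased}, which lifts $\epsilon$-biasedness from mod-$2$ linear forms on binary vectors to inner products with bounded integer vectors modulo $F$, to conclude $\Pr[\langle a^{t,(k-1)}, c_v - c_u\rangle \equiv 0 \pmod F] \leq 1/2 + O(\epsilon)$, and independence across $t$ bounds (A1) by $(1/2 + O(\epsilon))^{C\log n} = 1/\poly(n)$. A final union bound over (A1), (A2), and (B) yields total failure probability $1/\poly(n)$. The subtleties I expect to have to be careful about are folding the previous iteration's seeds $a^{t,(k-1)}$ into the probability space and checking that Lemma~\ref{lem:eps_biased} applies with the correct parameters $F, \epsilon$, and entry magnitude.
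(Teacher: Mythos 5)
Your proof is correct and takes essentially the same approach as the paper: your disjoint failure events (A1), (A2), (B) correspond one-to-one to the paper's chain of conditionings (first showing $S_v\neq S_u$, then $z_v\neq z_u$, then the outputs differ), and you invoke the same ingredients — Lemma~\ref{lem:eps_biased} applied to the previous iteration's $\epsilon$-biased vectors, primality of $F$ with uniform $b$, and the bias bound on the current iteration's $a^t$. The only detail the paper handles explicitly that you do not is the $k=1$ base case (where there is no previous pseudorandom iteration to unpack), which is trivial since distinct multisets of equal initial labels must differ in cardinality and hence in their sums.
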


We first need the following auxiliary lemma about $\eps$-biased sample spaces, proven in Section \ref{sec:omitted_proofs_ub2}.

\begin{lemma}\label{lem:eps_biased}
Let $\mathcal{D}$ be a probability distribution over $\{0, 1\}^m$ that is an $\eps$-biased sample space. Then, for any $x, y \in \Z^m$ such that $x \neq y,$ $\BP_{a \sim \mathcal{D}} [\langle a, x \rangle = \langle a, y \rangle] \le \frac{1}{2}+\frac{\eps}{2}$.
\end{lemma}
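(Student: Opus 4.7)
The statement is about an integer version of the standard fact that random vectors from an $\eps$-biased space distinguish any fixed nonzero binary vector from zero in the parity sense. The plan is to reduce the integer case to the $\GF(2)$ case by (i) rewriting the equality $\langle a,x\rangle=\langle a,y\rangle$ as $\langle a,z\rangle=0$ for $z=x-y\neq 0$, and then (ii) observing that any integer equality with $0$ implies a parity equality with $0$, to which the defining bias inequality can be applied.

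First I would introduce $z:=x-y\in\Z^m\setminus\{0\}$, so that the quantity to bound becomes $\BP_{a\sim\mathcal D}[\langle a,z\rangle=0]$. The subtlety, which will be the main obstacle, is that $z$ is an \emph{integer} vector; to feed it into the $\eps$-bias definition, we need a nonempty subset $I\subseteq\{1,\dots,m\}$ such that $\langle a,z\rangle\bmod 2=\sum_{i\in I}a_i\bmod 2$. Naively taking $I=\{i:z_i\text{ odd}\}$ can fail if every $z_i$ is even, in which case $\langle a,z\rangle$ is always even and the parity bound is vacuous.

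To handle this, I would normalize $z$ by dividing out the largest power of $2$ that divides every entry: write $z=2^k z'$ where $k\ge 0$ and $z'$ has at least one odd coordinate. Since $2^k\neq 0$ in $\Z$, the events $\langle a,z\rangle=0$ and $\langle a,z'\rangle=0$ coincide. Now $I:=\{i:z'_i\text{ odd}\}$ is nonempty, and since each $a_i\in\{0,1\}$,
\[
\langle a,z'\rangle \equiv \sum_{i=1}^m a_i z'_i \equiv \sum_{i\in I} a_i \pmod{2}.
\]

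Finally, I would conclude by applying the $\eps$-bias property to $I$: since
\[
\bigl|\BP[\textstyle\sum_{i\in I}a_i\equiv 0\!\!\!\pmod 2]-\BP[\textstyle\sum_{i\in I}a_i\equiv 1\!\!\!\pmod 2]\bigr|\le \eps
\]
and the two probabilities sum to $1$, we get $\BP[\sum_{i\in I}a_i\equiv 0\pmod 2]\le \tfrac12+\tfrac{\eps}{2}$. Combining with the chain of implications $\langle a,z\rangle=0\Rightarrow \langle a,z'\rangle=0\Rightarrow \langle a,z'\rangle\equiv 0\pmod 2$, we obtain $\BP[\langle a,x\rangle=\langle a,y\rangle]\le \tfrac12+\tfrac{\eps}{2}$, as required. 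The whole argument is short; the only non-routine step is the $2^k$ normalization that bridges the integer and $\GF(2)$ settings.
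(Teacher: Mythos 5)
Your proof is correct and follows essentially the same route as the paper's: reduce to bounding $\BP[\langle a,z\rangle=0]$ for $z=x-y\neq 0$, normalize by the largest power of $2$ dividing all entries so that some coordinate is odd, pass to $\GF(2)$, and apply the $\eps$-bias definition to the set $I$ of odd coordinates. The $2^k$ normalization is indeed the one non-obvious step, and you have handled it exactly as the paper does.
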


Note that this lemma is necessary, as we will be computing dot products of $a$ with integer vectors (over integers), not with binary vectors modulo $2$.

We are now ready to prove Lemma \ref{lem:gnn_prg}.

\begin{proof}[Proof of Lemma \ref{lem:gnn_prg}]
Let $x' = \sum_{w \in \mathcal{N}(v)} h_w^{(k-1)}$ denote the input for $v$ and analogously, define $y' = \sum_{w \in \mathcal{N}(u)} h_w^{(k-1)}$ to be the input for $u$. We first show that if $\{ h_w^{(k-1)} : w \in \mathcal{N}(v) \}$ is not equal to (as multisets) $\{ h_w^{(k-1)} : w \in \mathcal{N}(u) \}$
then $x' \ne y'$ with sufficiently large probability. We further consider the case that $k \ge 2$ since for $k=1$ (the first iteration), the statement follows since all node labels are initialized to be the same. Let $z_v'$ be the indices computed in Step $3$ of iteration $k-1$ (which are used to construct the node labels $h_v^{(k-1)}$ in iteration $k-1$). Note that there is a one to one mapping between $z_v'$ and  $h_v^{(k-1)}$. Thus we can assume $\{ z_w' : w \in \mathcal{N}(v) \} \ne  \{z_w' : w \in \mathcal{N}(u)\}$ without loss of generality. 

Let $\tilde{a}^1$ be the first $\eps$-biased vector used in the previous neural network $\phi^{(k-1)}$. Note that the first entry of $x'$ is equal to $\sum_{ w \in \mathcal{N}(v)} \tilde{a}^1(z_w)$, i.e., it is the dot product of $\tilde{a}^1$ with a suitable vector $x$ which is the sum of one-hot encoding of the neighborhood of $v$. The same statement is true for $y'$: the first entry of $y'$ is equal to the dot product of $\tilde{a}^1$ with a vector $y$ which represents the one-hot encoding of the neighborhood of $u$. This is because we computed the index $z_v'$ of every node in the previous iteration, as defined in Step $3$ of the construction, and passed along the coordinate of $\tilde{a}^1$ which corresponds to this computed index. By assumption, we know that $x \ne y$. Therefore by Lemma \ref{lem:eps_biased}, we have that 
$\BP_{\tilde{a}^1}[(x')_1 = (y')_1] = \BP_{\tilde{a}^1}(\langle \tilde{a}^1, x \rangle = \langle \tilde{a}^1, y \rangle) \le 2/3$ for a suitable $\eps$. By independence of vectors $\tilde{a}^t$, it follows that $\BP[x' = y'] \le (2/3)^{C \log n} \le 1/\poly(n)$ for a suitable constant $C>0$. 

We now condition on $x' \ne y'$. Without loss of generality, suppose that their first coordinates, $x'_1$ and $y'_1$, differ. We know $x'_1 \ne y'_1 \bmod F$ since $x'_1 \ne y'_1$, they are both non-negative and bounded by $n$, and $|x'_1 - y'_1| \le O(n)$ whereas $F$ is a prime at least $2n$. It follows that the probability of the event $\langle (x'-y'), b \rangle = 0 \bmod F$ is at most $1/F$. To see this, condition on all the entries of $b$ except $b_1$. Then $(x'-y')_1 \cdot b_1$ must be equal to a specific value modulo $F$ for $\langle (x'-y'), b \rangle = 0 \bmod F$ to hold, as desired. We now condition on this event which equivalently means we condition on $z_v \ne z_u$ (see Step $3$ of the construction).

Now our task is to show that $h_v^{(k)} \ne h_u^{(k)}$ with sufficiently high probability. The first coordinate of $h_v^{(k)}$ is equal to $\langle a^1, e_{z_v} \rangle$ where $a^1$ is the first $\eps$-biased vector considered in Step $4$ of the construction and $e_{z_v}$ is the basis vector in dimension $C \log n$ which has a $1$ entry only in the $z_v$ coordinate and $0$ otherwise. Therefore by Lemma \ref{lem:eps_biased}, we have that , 
$\BP_{a^1}[(h_v^{(k)})_1 = (h_u^{(k)})_1] = \BP_{a^1}[\langle a^1, e_{z_v} \rangle = \langle a^1, e_{z_u} \rangle] \le 2/3$ for a suitable choice of $\eps$ since $e_{z_u}$ and $e_{z_v}$ are distinct. By independence of vectors $a^t$, it follows that $\BP[h_v^{(k)} = h_u^{(k)}] \le (2/3)^{C \log n} \le 1/\poly(n)$  for a suitable constant $C>0$. This exactly means that the node labels of $u$ and $v$ in the next iteration are different with probability at least $1-1/\poly(n)$, as desired. Lastly, it is clear that if the multiset of neighborhood labels of $u$ and $v$ are the same, then $x' = y'$ and it's always the case that $h_v^{(k)} = h_u^{(k)}$.
\end{proof}

\paragraph{Complexity of the GNN.} We now analyze the overall complexity of representing $\phi^{(k)}$ as a ReLU neural network. First we state guarantees on generating $\eps$-based vectors using a ReLU network. The following corollary is proven in Appendix \ref{sec:omitted_proofs_ub2}.


\begin{corollary}\label{cor:eps_biased}
Let $s = O(\log F + \log(1/\eps))$. For every $\eps$ and $F$, there exists an explicit ReLU network $C:\{0,1\}^s \cup [F] \rightarrow \{0,1\}^F$ which takes as input $s$ uniform random bits and an index $i \in F$ and outputs the $i$th coordinate of an $\eps$-biased vector in $\{0,1\}^F$. $C$ uses $O(\log F)$ bits of precision and has $\poly(s)$ hidden units.
{More generally, given an integer parameter $t>0$, the function can be computed with $n^{O(1/t)} \polylog (n)$ hidden units and $O(t)$ depth.}
\end{corollary}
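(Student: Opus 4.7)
The plan is to combine a known explicit construction of an $\epsilon$-biased generator whose individual output bits are computable by a small threshold circuit with the simulation of threshold circuits by ReLU networks. By a classical result of Naor--Naor (and subsequent works) there exist $\epsilon$-biased sample spaces over $\{0,1\}^F$ with seed length $s=O(\log F+\log(1/\epsilon))$ such that, given the $s$-bit seed and an index $i\in[F]$, the $i$-th output bit can be evaluated by a uniform $\mathrm{TC}^0$ circuit of size $\poly(s)$ and constant depth. This is exactly the content of Theorem~\ref{thm:tc0} invoked in the technique overview, so I would formally cite that theorem as the black box providing the circuit $C_0$.

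Given $C_0$, the first step is to convert it into a ReLU network of comparable complexity using Lemma~\ref{lmm:thredhold}: each threshold gate on $k$ wires becomes $O(1)$ ReLU units after an affine preprocessing, so the overall ReLU network has $\poly(s)$ units, constant depth, and needs only $O(\log s)=O(\log\log F+\log\log(1/\epsilon))$ bits to represent the integer weights of each threshold gate, which are subsumed by an $O(\log F)$ precision budget. The input interface matches: the $s$ seed bits are fed in directly, and the index $i$ is passed in binary on the remaining $O(\log F)$ input wires, so the total input length is $O(s)$.

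For the depth--size tradeoff in the second sentence of the corollary, I would handle it by routing the construction through the same trick used in Theorem~\ref{thm:modF}. The only superconstant-depth subroutine hiding inside the pseudorandom bit computation is arithmetic modulo a prime of size $\poly(n)$ (to evaluate inner products over $\mathbb{F}_2$ or to index into a Reed--Muller/Justesen codeword); replacing each such modulo block by the depth-$t$, $(Fn)^{O(1/t)}\log n$-size construction of Theorem~\ref{thm:modF} yields a network of depth $O(t)$ and size $n^{O(1/t)}\polylog(n)$, matching the claim.

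The main obstacle, and the only place where care is needed, is the first step: making sure the $\epsilon$-biased generator I cite truly has a uniform $\mathrm{TC}^0$ bit-evaluation circuit of size $\poly(s)$, as opposed to merely a $\poly(F)$-time algorithm. I would therefore be explicit about which construction I use (for concreteness the Alon--Goldreich--H\aa{}stad--Peralta generator based on a Reed--Solomon concatenation, whose $i$-th bit is a sum over $O(\log F)$ products of seed bits with explicit coefficients), since for that construction the $\mathrm{TC}^0$ bound is standard. Once the generator is pinned down, the remaining steps are mechanical.
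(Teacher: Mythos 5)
Your high-level route matches the paper's: invoke Theorem~\ref{thm:tc0} to get a $\mathrm{TC}^0$ circuit of size $\poly(s)$ that computes the $i$-th output bit of the $\eps$-biased generator, then replace each threshold gate by $O(1)$ ReLU units via Lemma~\ref{lmm:thredhold}. That much is correct and is exactly how the paper proceeds.

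However, there is a genuine gap, and it shows up in two places. First, you write that ``the index $i$ is passed in binary on the remaining $O(\log F)$ input wires.'' But in the construction of Section~\ref{sec:best_construction}, the index that feeds into $C$ is the \emph{integer} $z_v \in [F]$ produced by the modulo step, not its binary expansion; the threshold gates of Theorem~\ref{thm:tc0} expect individual bits on individual wires. The paper's proof explicitly bridges this mismatch: it computes the $j$-th bit of $i$ as the indicator of whether $(i \bmod 2^{j+1}) < 2^j$, implemented in parallel for $j = 1, \dots, O(\log F)$ via Theorem~\ref{thm:modF}. Your proposal omits this integer-to-binary conversion entirely, so as written the ReLU network you describe does not accept the input it would actually be handed.

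Second, and related, your account of the depth--size tradeoff misidentifies where the non-constant depth comes from. You claim there is ``arithmetic modulo a prime \dots hiding inside the pseudorandom bit computation,'' but that cannot be: the circuit of Theorem~\ref{thm:tc0} is $\mathrm{TC}^0$, hence constant depth by definition, and Lemma~\ref{lmm:thredhold} preserves depth up to constants. In the paper the only superconstant-depth component is precisely the integer-to-binary conversion you omitted, and the $O(t)$-depth / $n^{O(1/t)}\polylog(n)$-size tradeoff is inherited by applying the tradeoff version of Theorem~\ref{thm:modF} to that conversion. Once you add the conversion step, your argument aligns with the paper's; without it, the interface is wrong and the depth analysis is attributed to the wrong subroutine.

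As a minor aside, the claim that the $\mathrm{TC}^0$ bound for AGHP-type generators ``is standard'' understates the situation: obtaining $\poly(s)$-size constant-depth threshold circuits (rather than $\poly(F)$-size) is the nontrivial content of Theorem~\ref{thm:tc0} from \cite{healy2006constant}, which is why the paper cites it as a black box rather than deriving it.
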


We can now analyze the complexity of our construction. The complexity can be computed by analyzing each step of the construction separately as follows:
\begin{enumerate}[leftmargin=*]
    \item For every node $v$, the sum of feature vectors of neighbors from the prior iteration, $\sum_{w \in \mathcal{N}(v)} h_w^{(k-1)}$, is returned by the aggregate function $f$.
    \item The inner product with the random vector $b$ in Step $3$ of the construction can be computed using one layer of the network. Then computing modulo $F$ can be constructed via Theorem \ref{thm:modF}.
    \item Given the inner product value $z_v$ which is the output of Step $3$ of the construction, we compute all of the $O(\log n)$ coordinates of $h_v^{(k)}$ in parallel. We recall that each coordinate of $h_v^{(k)}$ is indexing onto $O(\log n)$ $\eps$-biased random vectors and we use the \emph{same} index for all vectors, namely the $z_v$-th index. This can be done as follows. We first have $O(\log n)$ edges fanning-out from the node which computes $z_v$. For all $t = 1,\ldots, O(\log n)$, the other endpoint of the $t$-th fan-out edge computes the value $a^t(z_v)$ where $a^t$ is the $t$-th $\eps$-biased vector as stated in Steps 4 and 5 of the construction. This can be done by appealing to the construction guaranteed by Corollary \ref{cor:eps_biased}. The result of this computation is exactly $h_v^{(k)}$.
\end{enumerate}
Altogether, we have proven the following theorem.

\begin{theorem}\label{thm:WL-GNN}
There exists a construction of $\phi$ which performs a successful WL iteration according to Definition \ref{def:one_iteration} with $p \le 1/\poly(n)$. $\phi$ has depth $O(\log n)$, $O(\poly(\log n))$ hidden units, and requires $O(\log n)$ bits of precision. All labels in all iterations are binary vectors in $\{0,1\}^{O(\log n)}$.
{More generally, given an integer parameter $t>0$, the function can be computed with $n^{O(1/t)} \polylog (n)$ hidden units and $O(t)$ depth.}
\end{theorem}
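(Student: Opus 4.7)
The plan is to combine the correctness guarantee already established in Lemma \ref{lem:gnn_prg} with a step-by-step complexity accounting of the neural network implementing $\phi^{(k)}$ as described in the five-step construction of Section~\ref{sec:best_construction}. Correctness is immediate: Lemma \ref{lem:gnn_prg} shows that for any fixed pair $u,v$ whose neighborhood multisets differ, the new labels collide with probability $1/\poly(n)$, matching Definition~\ref{def:one_iteration}. Thus only the size, depth, precision, and label-dimension bounds remain.

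Next I would bound the complexity of each step in sequence. Step~1 (aggregation by summation) contributes a single linear layer whose width is the label dimension $O(\log n)$. Step~2 requires first computing the inner product $\langle b, \sum_w h_w^{(k-1)}\rangle$, which is another linear layer of the same width, and then reducing modulo $F=\poly(n)$; invoking Theorem~\ref{thm:modF} directly gives an $O(\log n)$-depth ReLU subnetwork with $O(\log n)$ hidden units. Step~3, the indexing into $O(\log n)$ independent $\eps$-biased vectors $a^1,\dots,a^{C\log n}$ at the common index $z_v$, is fanned out into $O(\log n)$ parallel subnetworks, each realized by the explicit construction of Corollary~\ref{cor:eps_biased}. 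Since $F=\poly(n)$ and $\eps$ is a constant, the seed length is $s=O(\log F+\log(1/\eps))=O(\log n)$, so each such subnetwork has $\poly(s)=\polylog(n)$ hidden units and depth $O(\log n)$. Summing across the $O(\log n)$ parallel copies gives $\polylog(n)$ total hidden units for Step~3.

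Adding up the three stages yields depth $O(\log n)$ and $\polylog(n)$ hidden units in total. Precision is $O(\log n)$ bits throughout: the entries of $b$ and all intermediate dot products lie in $\{0,\dots,nF\}=\poly(n)$, the mod-$F$ output is in $[F]=\poly(n)$, and Corollary~\ref{cor:eps_biased} uses $O(\log F)=O(\log n)$ bits as well. The output label $h_v^{(k)}$ has exactly $C\log n$ binary coordinates by construction, establishing the label-dimension claim. The depth-size tradeoff follows identically: both Theorem~\ref{thm:modF} and Corollary~\ref{cor:eps_biased} admit the generalized bound of $n^{O(1/t)}\polylog(n)$ units at depth $O(t)$, and composing them preserves this trade-off.

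The only part that requires care rather than pure bookkeeping is verifying that the $O(\log n)$ parallel copies of the $\eps$-biased generator genuinely share the same input wire $z_v$ computed in Step~2, so that we do not pay an extra factor for duplicating Step~2's subnetwork; this is handled by fanning the single output of the mod-$F$ subnetwork out to all $O(\log n)$ generators. No other obstacle arises, since the stochastic analysis lives entirely inside Lemma~\ref{lem:gnn_prg} and the pseudorandom construction is black-boxed by Corollary~\ref{cor:eps_biased}.
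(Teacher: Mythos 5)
Your proof is correct and follows essentially the same route as the paper: correctness is delegated to Lemma~\ref{lem:gnn_prg}, and the complexity bounds come from adding up a linear aggregation layer, one linear layer for the dot product with $b$, a mod-$F$ subnetwork via Theorem~\ref{thm:modF}, and $O(\log n)$ parallel copies of the $\eps$-biased generator via Corollary~\ref{cor:eps_biased}, with the depth–size tradeoff inherited from those two building blocks. The only detail the paper handles explicitly that you black-box through Corollary~\ref{cor:eps_biased} is the conversion of the integer index $z_v$ into its $O(\log F)$-bit binary representation (needed because the underlying $\mathrm{TC}^0$ circuit takes bitwise input), but since the corollary's statement already absorbs this, your accounting is sound.
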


\section{Lower Bounds}\label{sec:lower_bound_overview}
We complement our construction with lower bounds on the label size and number of ReLU units required to simulate the WL test. We outline these two lower bounds below and defer the full details to Appendix~\ref{sec:lower_bounds}.

\paragraph{Message Size.} Recall that in our construction, the message (label) size was $O(\log n)$ bits. Via communication complexity, we give a corresponding lower bound. In particular, we construct a graph on which any (randomized) communication protocol which simulates WL as in Definition~\ref{def:one_iteration} must send at least $\Omega(\log n)$ bits along one edge of the graph. As message-passing GNNs are a specific class of communication protocols, this immediately implies that the message sizes must have $\Omega(\log n)$ bits, so our construction is optimal in that respect.

The hard instance is formed by a graph which is a collection of disjoint star subgraphs of sizes ranging from $2$ to $\Theta(\sqrt{n})$. In order to perform a valid WL coloring, each node must essentially learn the size of its subgraph, requiring $\Omega(\log(\sqrt{n})) = \Omega(\log n)$ bits of communication. In addition, this must be done in only $2$ iterations as the depth of each subgraph is $2$, so some node must send $\Omega(\log n)$ bits to its neighbors in a single round. See Appendix~\ref{sec:low-bound-communication} for the full details and proof.

\paragraph{Number of ReLU Units.} In order to show a lower bound on the number of units needed to implement a successful WL iteration, we rely on prior work lower bounding the number of linear regions induced by a ReLU network (for instance~\cite{montufar2014number}). In particular, these works show that ReLU networks induce a partition of the input space into $N$ convex regions (where $N$ is a function of the size of the network) such that the network acts as a linear function restricted to any given region. Using these results, we describe a fixed graph and a distribution over inputs to the neural network $S_u^{(k-1)}$ for all $u \in V$ (sums of the labels from the previous round) which includes $O(F)$ potential special pairs of nodes (where $F$ is defined such that inputs $S_u^{(k-1)} \in [F]^t$ for some $t$).
For each such pair $u, v$, their neighborhoods $N(u), N(v)$ have different multisets of inputs, but both multisets of inputs sum to the same value.
We show that if the number of linear regions is small, $N = o(F)$, then it is relatively likely that $u, v$ will be in the same linear region and thus their sums will collide: $S_u^{(k)} = S_v^{(k)}$ even while their neighborhoods had distinct inputs in the $(k-1)$st round.

This immediately gives a $\Omega(\log F)$ lower bound on the number of ReLU units (and thus number of parameters) with more refined depth/width tradeoffs given in \cref{sec:more-units-results}. Note that $F$ is the size of each coordinate in the \emph{sum} of labels. Even if the labels are binary, $F$ can be as large as $n$, depending on the max degree in the graph, which implies a $\Omega(\log n)$ lower bound on the number of ReLU units. See \cref{sec:low-bound-units} for full details and proof.

\section{Experiments}
To demonstrate the expressivity of our construction, i.e., that our small-sized GNN reliably simulates the WL test, we perform experiments on both synthetic and real world data sets. Common to all of our experiments is that we start with some graph $G=(V,E)$ (either real world or generated with respect to some probability distribution). We then simulate a perfect run of the WL test on $G$ where any two nodes which receive different multisets of labels in iteration $k-1$ get distinct labels in iteration $k$ with probability $1$ as well as a run of our construction from\footnote{Since our goal is to test whether our protocol correctly simulates WL test with small messages, we are not implementing the actual GNNs but instead we are simulating their computation. Further, for simplicity, we replaced the $\eps$-biased sample space with a random string, which guarantees $\eps=0$.}~\Cref{sec:best_construction}.
At any point in time, the node labels induce partitions of $V$ where two nodes are in the same class if they have the same labels. Denote the partitions after $k$-iterations using the perfect simulation and our construction respectively by $\mathcal{P}_k$ and $\mathcal{P}_k'$. Letting $k_0$ be minimal such that $\mathcal{P}_{k_0-1}=\mathcal{P}_{k_0}$ (at which point the WL labels have converged), we consider the implementation using our GNN successful if $\mathcal{P}_k=\mathcal{P}_k'$ for all $k\leq k_0$, i.e., if the the simulation using our implementation induced the same partitions as a perfect runs. For all of our experiments it turned out that $k_0\leq 5$ (see~\cite{Bause2022slowandsteady} for a discussion of this fast convergence).

\begin{figure}[ht]
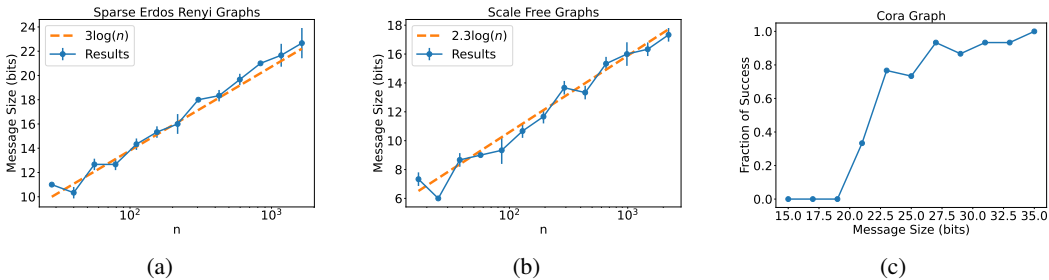

\centering
\begin{subfigure}[b]{0.3\textwidth}
    \centering
    \includegraphics[width=\textwidth]{Figures/random_graphs.pdf}
    \caption{}
    \label{fig:erdos-renyi}
\end{subfigure} \hspace{0.2in}
\begin{subfigure}[b]{0.3\textwidth}
    \centering
    \includegraphics[width=\textwidth]{Figures/scale_free.pdf}
    \caption{}
    \label{fig:scale-free}
\end{subfigure} \hspace{0.2in}
\begin{subfigure}[b]{0.3\textwidth}
    \centering
    \includegraphics[width=\textwidth]{Figures/cora.pdf}
    \caption{}
    \label{fig:cora}
\end{subfigure}
\caption{Plots of our experimental results on Erd\H{o}s-Rényi graphs, scale free graphs, and the real world Cora graph. The vertical blue lines in (a) and (b) are the empirical standard deviations over the 5 independent samples of random graphs. Note that the $x$-axes in (a) and (b) are logarithmic.}
\label{fig:scale-free-and-random}
\vspace{-0.1 in}
\end{figure}

\paragraph{Sparse Erd\H{o}s-Rényi Graphs} We generated Erd\H{o}s-Rényi random graphs $G(n,p)$ with $p=20/n$ for a varying number of vertices $n$. For each value of $n$, we generated five such graphs and for each of these five graphs, we ran 10 independent trials of our GNN implementation with message sizes $t=1,2,\dots$. Averaging over the five graphs, we report the minimal $t$ such that at least $70\%$ of the 10 iterations successfully simulated the WL test. See~\Cref{fig:erdos-renyi}. The average message size needed to achieve this is approximately $3\log n$ where the logarithmic dependence on $n$ is as predicted theoretically and significantly improves on the linear message size required for prior constructions.

\paragraph{Scale Free Graphs} We generated samples of the scale free graphs from~\cite{Bollobas2003scalefree} with a varying number of vertices $n$ using the implementation from~\cite{SciPyProceedings_11}. Our experiment design was the same as for Erd\H{o}s-Rényi random graphs. See~\Cref{fig:scale-free}.

\paragraph{Cora Graph}
We finally ran experiments on the real world graph Cora\footnote{https://graphsandnetworks.com/the-cora-dataset/} which is the citation network of $n=2708$ scientific publications. We simulated our GNN with varying message lengths, for each message length reporting the fraction of successful runs of $30$ independent trials. See~\Cref{fig:cora} for a plot of the results. We see that with message length $35$, all of the $30$ trials successfully simulated the WL test. 

\paragraph{Acknowledgements}
Anders Aamand is supported by DFF-International Postdoc Grant 0164-00022B from the Independent Research Fund Denmark. This research was also supported by the NSF TRIPODS program
(award DMS-2022448), NSF award CCF-2006664,  Simons Investigator Award, MIT-IBM Watson AI Lab, GIST-
MIT Research Collaboration grant, NSF
Graduate Research Fellowship under Grant No. 1745302, and MathWorks Engineering Fellowship.

\bibliographystyle{alpha}
\bibliography{bib}

\newcommand{\etalchar}[1]{$^{#1}$}
\begin{thebibliography}{MBHSL19}

\bibitem[ACGL21]{abboud2021surprising}
Ralph Abboud, Ismail~Ilkan Ceylan, Martin Grohe, and Thomas Lukasiewicz.
\newblock The surprising power of graph neural networks with random node
  initialization.
\newblock In {\em IJCAI}, 2021.

\bibitem[Bab16]{babai2016graph}
L{\'a}szl{\'o} Babai.
\newblock Graph isomorphism in quasipolynomial time.
\newblock In {\em Proceedings of the forty-eighth annual ACM symposium on
  Theory of Computing}, pages 684--697, 2016.

\bibitem[BBCR03]{Bollobas2003scalefree}
B\'{e}la Bollob\'{a}s, Christian Borgs, Jennifer Chayes, and Oliver Riordan.
\newblock Directed scale-free graphs.
\newblock In {\em Proceedings of the Fourteenth Annual ACM-SIAM Symposium on
  Discrete Algorithms}, SODA '03, page 132–139, USA, 2003. Society for
  Industrial and Applied Mathematics.

\bibitem[BFZB22]{bouritsas2022improving}
Giorgos Bouritsas, Fabrizio Frasca, Stefanos~P Zafeiriou, and Michael
  Bronstein.
\newblock Improving graph neural network expressivity via subgraph isomorphism
  counting.
\newblock {\em IEEE Transactions on Pattern Analysis and Machine Intelligence},
  2022.

\bibitem[BGRR21]{barcelo2021graph}
Pablo Barcel{\'o}, Floris Geerts, Juan Reutter, and Maksimilian Ryschkov.
\newblock Graph neural networks with local graph parameters.
\newblock {\em Advances in Neural Information Processing Systems}, 34, 2021.

\bibitem[BK79]{babai1979canonical}
L{\'a}szl{\'o} Babai and Ludik Kucera.
\newblock Canonical labelling of graphs in linear average time.
\newblock In {\em 20th Annual Symposium on Foundations of Computer Science
  (sfcs 1979)}, pages 39--46. IEEE, 1979.

\bibitem[BK22]{Bause2022slowandsteady}
Franka Bause and Nils~M. Kriege.
\newblock Gradual weisfeiler-leman: Slow and steady wins the race.
\newblock {\em CoRR}, abs/2209.09048, 2022.

\bibitem[BKM{\etalchar{+}}20]{barcelo2020logical}
Pablo Barcel{\'o}, Egor Kostylev, Mikael Monet, Jorge P{\'e}rez, Juan Reutter,
  and Juan-Pablo Silva.
\newblock The logical expressiveness of graph neural networks.
\newblock In {\em 8th International Conference on Learning Representations
  (ICLR 2020)}, 2020.

\bibitem[CCK{\etalchar{+}}21]{cappart2021combinatorial}
Quentin Cappart, Didier Ch{\'e}telat, Elias Khalil, Andrea Lodi, Christopher
  Morris, and Petar Veli{\v{c}}kovi{\'c}.
\newblock Combinatorial optimization and reasoning with graph neural networks.
\newblock {\em IJCAI}, 2021.

\bibitem[CMR21]{cotta2021reconstruction}
Leonardo Cotta, Christopher Morris, and Bruno Ribeiro.
\newblock Reconstruction for powerful graph representations.
\newblock {\em Advances in Neural Information Processing Systems}, 34, 2021.

\bibitem[CVCB19]{chen2019equivalence}
Zhengdao Chen, Soledad Villar, Lei Chen, and Joan Bruna.
\newblock On the equivalence between graph isomorphism testing and function
  approximation with gnns.
\newblock {\em Advances in neural information processing systems}, 32, 2019.

\bibitem[FLM{\etalchar{+}}19]{fey2019deep}
Matthias Fey, Jan~E Lenssen, Christopher Morris, Jonathan Masci, and Nils~M
  Kriege.
\newblock Deep graph matching consensus.
\newblock In {\em International Conference on Learning Representations}, 2019.

\bibitem[GMP21]{geerts2021let}
Floris Geerts, Filip Mazowiecki, and Guillermo Perez.
\newblock Let’s agree to degree: Comparing graph convolutional networks in
  the message-passing framework.
\newblock In {\em International Conference on Machine Learning}, pages
  3640--3649. PMLR, 2021.

\bibitem[Gro21]{grohe2021logic}
Martin Grohe.
\newblock The logic of graph neural networks.
\newblock In {\em 2021 36th Annual ACM/IEEE Symposium on Logic in Computer
  Science (LICS)}, pages 1--17. IEEE Computer Society, 2021.

\bibitem[HSS08]{SciPyProceedings_11}
Aric~A. Hagberg, Daniel~A. Schult, and Pieter~J. Swart.
\newblock Exploring network structure, dynamics, and function using networkx.
\newblock In Ga\"el Varoquaux, Travis Vaught, and Jarrod Millman, editors, {\em
  Proceedings of the 7th Python in Science Conference}, pages 11 -- 15,
  Pasadena, CA USA, 2008.

\bibitem[HV06]{healy2006constant}
Alexander Healy and Emanuele Viola.
\newblock Constant-depth circuits for arithmetic in finite fields of
  characteristic two.
\newblock In {\em Annual Symposium on Theoretical Aspects of Computer Science},
  pages 672--683. Springer, 2006.

\bibitem[HV21]{huang2021short}
Ningyuan~Teresa Huang and Soledad Villar.
\newblock A short tutorial on the weisfeiler-lehman test and its variants.
\newblock In {\em ICASSP 2021-2021 IEEE International Conference on Acoustics,
  Speech and Signal Processing (ICASSP)}, pages 8533--8537. IEEE, 2021.

\bibitem[HYL17]{hamilton2017inductive}
Will Hamilton, Zhitao Ying, and Jure Leskovec.
\newblock Inductive representation learning on large graphs.
\newblock {\em Advances in neural information processing systems}, 30, 2017.

\bibitem[Jeg22]{jegelka2022theory}
Stefanie Jegelka.
\newblock Theory of graph neural networks: Representation and learning.
\newblock {\em arXiv preprint arXiv:2204.07697}, 2022.

\bibitem[LGD{\etalchar{+}}19]{li2019graph}
Yujia Li, Chenjie Gu, Thomas Dullien, Oriol Vinyals, and Pushmeet Kohli.
\newblock Graph matching networks for learning the similarity of graph
  structured objects.
\newblock In {\em International conference on machine learning}, pages
  3835--3845. PMLR, 2019.

\bibitem[Lou19]{loukas2019graph}
Andreas Loukas.
\newblock What graph neural networks cannot learn: depth vs width.
\newblock In {\em International Conference on Learning Representations}, 2019.

\bibitem[MBHSL19]{maron2019provably}
Haggai Maron, Heli Ben-Hamu, Hadar Serviansky, and Yaron Lipman.
\newblock Provably powerful graph networks.
\newblock {\em Advances in neural information processing systems}, 32, 2019.

\bibitem[MBM{\etalchar{+}}17]{monti2017geometric}
Federico Monti, Davide Boscaini, Jonathan Masci, Emanuele Rodola, Jan Svoboda,
  and Michael~M Bronstein.
\newblock Geometric deep learning on graphs and manifolds using mixture model
  cnns.
\newblock In {\em Proceedings of the IEEE conference on computer vision and
  pattern recognition}, pages 5115--5124, 2017.

\bibitem[MFK21]{morris2021power}
Christopher Morris, Matthias Fey, and Nils~M Kriege.
\newblock The power of the weisfeiler-leman algorithm for machine learning with
  graphs.
\newblock {\em IJCAI}, 2021.

\bibitem[MLM{\etalchar{+}}21]{morris2021weisfeiler}
Christopher Morris, Yaron Lipman, Haggai Maron, Bastian Rieck, Nils~M Kriege,
  Martin Grohe, Matthias Fey, and Karsten Borgwardt.
\newblock Weisfeiler and leman go machine learning: The story so far.
\newblock {\em arXiv preprint arXiv:2112.09992}, 2021.

\bibitem[Mon17]{montufar2017notes}
Guido Mont{\'u}far.
\newblock Notes on the number of linear regions of deep neural networks.
\newblock 2017.

\bibitem[MPCB14]{montufar2014number}
Guido~F Montufar, Razvan Pascanu, Kyunghyun Cho, and Yoshua Bengio.
\newblock On the number of linear regions of deep neural networks.
\newblock {\em Advances in neural information processing systems}, 27, 2014.

\bibitem[MRF{\etalchar{+}}19]{morris2019wl}
Christopher Morris, Martin Ritzert, Matthias Fey, William~L. Hamilton, Jan~Eric
  Lenssen, Gaurav Rattan, and Martin Grohe.
\newblock {Weisfeiler and Leman Go Neural: Higher-Order Graph Neural Networks}.
\newblock AAAI'19/IAAI'19/EAAI'19. AAAI Press, 2019.

\bibitem[MSRR19]{murphy2019relational}
Ryan Murphy, Balasubramaniam Srinivasan, Vinayak Rao, and Bruno Ribeiro.
\newblock Relational pooling for graph representations.
\newblock In {\em International Conference on Machine Learning}, pages
  4663--4673. PMLR, 2019.

\bibitem[NM20]{nguyen2020graph}
Hoang Nguyen and Takanori Maehara.
\newblock Graph homomorphism convolution.
\newblock In {\em International Conference on Machine Learning}, pages
  7306--7316. PMLR, 2020.

\bibitem[QSMG17]{qi2017pointnet}
Charles~R Qi, Hao Su, Kaichun Mo, and Leonidas~J Guibas.
\newblock Pointnet: Deep learning on point sets for 3d classification and
  segmentation.
\newblock In {\em Proceedings of the IEEE conference on computer vision and
  pattern recognition}, pages 652--660, 2017.

\bibitem[RPK{\etalchar{+}}17]{raghu2017expressive}
Maithra Raghu, Ben Poole, Jon Kleinberg, Surya Ganguli, and Jascha
  Sohl-Dickstein.
\newblock On the expressive power of deep neural networks.
\newblock In {\em international conference on machine learning}, pages
  2847--2854. PMLR, 2017.

\bibitem[STR18]{serra2018bounding}
Thiago Serra, Christian Tjandraatmadja, and Srikumar Ramalingam.
\newblock Bounding and counting linear regions of deep neural networks.
\newblock In {\em International Conference on Machine Learning}, pages
  4558--4566. PMLR, 2018.

\bibitem[SYK21]{sato2021random}
Ryoma Sato, Makoto Yamada, and Hisashi Kashima.
\newblock Random features strengthen graph neural networks.
\newblock In {\em Proceedings of the 2021 SIAM International Conference on Data
  Mining (SDM)}, pages 333--341. SIAM, 2021.

\bibitem[Tel16]{telgarsky2016}
Matus Telgarsky.
\newblock Benefits of depth in neural networks.
\newblock In Vitaly Feldman, Alexander Rakhlin, and Ohad Shamir, editors, {\em
  Proceedings of the 29th Conference on Learning Theory, {COLT} 2016, New York,
  USA, June 23-26, 2016}, volume~49 of {\em {JMLR} Workshop and Conference
  Proceedings}, pages 1517--1539. JMLR.org, 2016.

\bibitem[TRWG21]{toenshoff2021graph}
Jan Toenshoff, Martin Ritzert, Hinrikus Wolf, and Martin Grohe.
\newblock Graph learning with 1d convolutions on random walks.
\newblock {\em arXiv preprint arXiv:2102.08786}, 2021.

\bibitem[VLF20]{vignac2020building}
Clement Vignac, Andreas Loukas, and Pascal Frossard.
\newblock Building powerful and equivariant graph neural networks with
  structural message-passing.
\newblock {\em Advances in Neural Information Processing Systems},
  33:14143--14155, 2020.

\bibitem[WKK{\etalchar{+}}20]{wieder2020compact}
Oliver Wieder, Stefan Kohlbacher, M{\'e}laine Kuenemann, Arthur Garon, Pierre
  Ducrot, Thomas Seidel, and Thierry Langer.
\newblock A compact review of molecular property prediction with graph neural
  networks.
\newblock {\em Drug Discovery Today: Technologies}, 37:1--12, 2020.

\bibitem[WL68]{wlpaper}
Boris Weisfeiler and Andrew Lehman.
\newblock A reduction of a graph to a canonical form and an algebra arising
  during this reduction.
\newblock {\em Nauchno-Technicheskaya Informatsiya, Ser. 2, no. 9 (1968), 12-16
  (in Russian)}, 1968.

\bibitem[XHLJ19]{xu2019}
Keyulu Xu, Weihua Hu, Jure Leskovec, and Stefanie Jegelka.
\newblock How powerful are graph neural networks?
\newblock In {\em 7th International Conference on Learning Representations,
  {ICLR} 2019, New Orleans, LA, USA, May 6-9, 2019}. OpenReview.net, 2019.

\bibitem[YHC{\etalchar{+}}18]{ying2018graph}
Rex Ying, Ruining He, Kaifeng Chen, Pong Eksombatchai, William~L Hamilton, and
  Jure Leskovec.
\newblock Graph convolutional neural networks for web-scale recommender
  systems.
\newblock In {\em Proceedings of the 24th ACM SIGKDD international conference
  on knowledge discovery \& data mining}, pages 974--983, 2018.

\bibitem[YYZ18]{yu2018spatio}
Bing Yu, Haoteng Yin, and Zhanxing Zhu.
\newblock Spatio-temporal graph convolutional networks: a deep learning
  framework for traffic forecasting.
\newblock In {\em Proceedings of the 27th International Joint Conference on
  Artificial Intelligence}, pages 3634--3640, 2018.

\end{thebibliography}



\appendix

\section{Omitted Proofs of Section \ref{sec:intro}}\label{sec:omitted_intro}

\begin{proposition}\label{prop:union}
    Let $\ell_v^{(T)}$ be the labels of nodes after running the WL test for $T$ iterations on an input graph $G$ with initial labels $\ell_v^{(0)}$ all equal and suppose $p \le \delta/(n^2T)$. Suppose $T$ independent GNN iterations are successful according to Definition \ref{def:one_iteration} and the output labels of iteration $i$ are the input labels of iteration $i+1$ for all $i$ with the initial labels being equal to $\ell_v^{(0)}$. Let $h_v^{(T)}$ denote the node labels which are the output of the final iteration of the GNN. Then for all $v_1, v_2 \in V$, the following statements hold with probability $1-\delta$:
    \begin{itemize}
        \item If $\ell_{v}^{(T)} = \ell_u^{(T)}$ then $h_{v}^{(T)} = h_{u}^{(T)}$ and
        \item If $\ell_{v}^{(T)} \ne \ell_u^{(T)}$ then $h_{v}^{(T)} \ne h_{u}^{(T)}$.
    \end{itemize}
\end{proposition}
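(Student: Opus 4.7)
The plan is to prove the proposition by induction on the iteration index $k$ from $0$ to $T$, combined with a union bound over node pairs and iterations. Specifically, I will show that with probability at least $1-\delta$, for every $k \in \{0,1,\ldots,T\}$ the partition of $V$ induced by the GNN labels $\{h_v^{(k)}\}_{v\in V}$ equals the partition induced by the WL labels $\{\ell_v^{(k)}\}_{v\in V}$. Taking $k=T$ gives the conclusion of the proposition.

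For the base case $k=0$, both labelings are constant, so the trivial partition coincides. For the inductive step, assume that at iteration $k-1$ the two partitions agree. Then for every node $v$, the multisets $\{h_w^{(k-1)} : w \in \mathcal{N}(v)\}$ and $\{\ell_w^{(k-1)} : w \in \mathcal{N}(v)\}$ induce the same partition of $V$ (viewed as functions of $v$); in particular, for any pair $u,v$, the GNN neighborhood multisets are equal if and only if the WL neighborhood multisets are equal. Now fix a pair $u,v$. If the WL multisets are equal, then $\ell_u^{(k)} = \ell_v^{(k)}$ (since the WL hash is a function of the multiset), and the first bullet of Definition~\ref{def:one_iteration} forces $h_u^{(k)} = h_v^{(k)}$ with probability $1$. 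If the WL multisets differ, then $\ell_u^{(k)} \neq \ell_v^{(k)}$ (since HASH is injective), and by the second bullet $h_u^{(k)} \neq h_v^{(k)}$ with probability at least $1-p$; call the complementary event a \emph{bad event} for the pair $(u,v)$ at iteration $k$.

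Applying a union bound over all at most $\binom{n}{2} \le n^2/2$ unordered pairs and all $T$ iterations, the probability that \emph{any} bad event occurs is at most
\[
p \cdot T \cdot \binom{n}{2} \le \frac{\delta}{n^2 T} \cdot T \cdot \frac{n^2}{2} = \frac{\delta}{2} \le \delta.
\]
Conditioned on no bad event occurring, the inductive step goes through at every iteration and the partitions match all the way up to $k=T$. Matching partitions at $k=T$ is exactly the conclusion: $\ell_u^{(T)} = \ell_v^{(T)}$ iff $h_u^{(T)} = h_v^{(T)}$ for every pair $u,v \in V$.

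There is essentially no technical obstacle here; the only thing to be careful about is the inductive bookkeeping, namely that the hypothesis ``the partitions match at step $k-1$'' is exactly what is needed to apply Definition~\ref{def:one_iteration} at step $k$ (since that definition is phrased in terms of the previous-round multisets). The independence of the random weights across iterations (stated in Section~\ref{sec:first_construction}) is implicitly used in the union bound, though the union bound does not actually require independence.
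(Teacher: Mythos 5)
Your proof is correct and follows essentially the same approach as the paper's: induct on the iteration to maintain that the GNN-label partition matches the WL-label partition, then union bound over all $O(n^2)$ pairs and $T$ iterations to control the total failure probability. The paper's exposition is a bit terser (it does not spell out the base case or the partition-matching observation you make explicit), but the argument is the same.
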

\begin{proof}
Consider some iteration $i \le T$. Suppose we have the following guarantee on the node label inputs for the $i$th iteration (note the inputs are the output labels of the previous iteration):
    \begin{itemize}
        \item If $\ell_{v}^{(i-1)} = \ell_u^{(i-1)}$ then $h_{v}^{(i-1)} = h_{u}^{(i-1)}$ and
        \item If $\ell_{v}^{(i-1)} \ne \ell_u^{(i-1)}$ then $h_{v}^{(i-1)} \ne h_{u}^{(i-1)}$.
    \end{itemize}
Given the guarantee for iteration $i-1$, we show that the same guarantee is true for the labels outputted in the $i$th iteration with probability $1-p$. Consider what the WL test does given input labels $\ell_{v_1}^{(i-1)}$: it assigns the same node labels to all pairs of vertices with the same multiset of neighborhood (input) labels and different labels for all pairs of vertices with different multiset of neighborhood (input) labels. This is exactly the same guarantee as Definition \ref{def:one_iteration}, except we also have to union bound over all $\le n^2$ pairs of nodes in Definition \ref{def:one_iteration}. Thus, the invariant stated in the beginning of the proof also holds for the $i$th iteration with probability $1-n^2p$. Now applying a union bound across all $T$ iterations, the total failure probability is $n^2Tp \le \delta$, as desired.
\end{proof}

\section{Omitted Proofs of Section \ref{sec:first_construction}}\label{sec:omitted_proofs_ub1}

\begin{proof}[Proof of Lemma \ref{lem:first_construction}]
Since all vectors $h_u^{(k)}$ are always one-hot encoded vectors, if two nodes have differing neighborhood multisets of labels, then the sum of the one-hot encoding vectors of each neighborhoods will also be different. Suppose we are in this case and let $y$ and $y'$ denote the sum of the neighborhood labels of $v$ and $u$ respectively. Without loss of generality, suppose $y$ and $y'$ differ in the first coordinate, $y_1$ and $y_1'$. Then $\langle a , y-y' \rangle \equiv 0 \bmod F$ if and only if $a_1(y_1 - y_1') \equiv c \bmod F$ where $c = \sum_{i=2}^F a_i(y_i' -  y_i)$. Condition on the event that $a_1 \not \equiv 0 \bmod F$ which happens with probability $1-1/F$. We now claim that $y_1' - y_1 \not \equiv 0 \bmod{F}$. Indeed, $0 \le y_1 \ne y_1' \le n$ so $ 0 \ne |y_1 - y_1'| \le n$. Since $F > 2n$, it cannot be the case that $y_1 - y_1' \equiv 0 \bmod F$ which means that the multiplicative inverse of $(y_1 - y_1') \bmod F$, denoted as $(y_1 - y_1')^{-1}$, is well-defined and unique. Thus for $a_1(y_1 - y_1') \equiv c \bmod F$, we must have $a_1 \equiv c \dot (y_1 - y_1')^{-1} \bmod F$ which happens with probability $O(1/F)$. Altogether, we have that  $\BP_a[\langle a , y\rangle \equiv \langle a, y' \rangle \bmod F] \le O(1/F)$,
as desired. Conditioning on $\langle a , y\rangle \not \equiv \langle a, y' \rangle \bmod F$, we have that $h_v^{(k)} \ne h_u^{(k)}$ as their single non-zero coordinates are distinct. Finally, we can easily check that if the multiset of neighborhoods of $u$ and $v$ are the same, then $h_v^{(k)} = h_u^{(k)}$ always holds.
\end{proof}

\subsection{Implementation of Modulo $F$ via ReLU Network}\label{sec:modF}

We now give an efficient ReLU network construction of the function computing modulo $F$, thereby proving Theorem \ref{thm:modF}. First we define the function $\TW_F: \{0, \ldots, NF\} \rightarrow \{0, \ldots, F-1\}$, also known as the ``triangular wave" function:
\[\TW_F(x) = \mathrm{T}_F(x \bmod{2F})\] where 
\[\mathrm{T}_F(x) = 
\begin{cases}
        x, \text{ if }  x \le F \\ 
        2F-x, \text{ if } x > F.
\end{cases} \]

A result of \cite{telgarsky2016} implements the function $\TW_F$ using low-depth ReLU networks.

\begin{theorem}[Lemma 3.8 and Corollary 3.9 in \cite{telgarsky2016}] Suppose $F = \poly(n)$. The function $\TW_F: \{0, \ldots, nF \} \rightarrow \{0, \ldots, F-1\}$ can be implemented using a neural network with $O(\log n)$ hidden units and $O( \log n)$ depth.
\end{theorem}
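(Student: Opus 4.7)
The plan is to follow Telgarsky's classical triangle-composition construction, building $\TW_F$ from a single simple piece by iterated self-composition. First, I would define the ``tent'' function $T:\mathbb{R}\to\mathbb{R}$ by $T(x)=2x$ for $x\in[0,1/2]$, $T(x)=2-2x$ for $x\in[1/2,1]$, and $T(x)=0$ elsewhere. This can be realized exactly by the three-ReLU expression $T(x)=2\,\mathrm{ReLU}(x)-4\,\mathrm{ReLU}(x-1/2)+2\,\mathrm{ReLU}(x-1)$, so $T$ is a constant-size, constant-depth sub-network.

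The central claim to establish, by induction on $k$, is that the $k$-fold composition $T^{(k)}=T\circ\cdots\circ T$ is a sawtooth on $[0,1]$ whose graph consists of $2^{k-1}$ identical triangular peaks of height $1$, situated at $x=(2j-1)/2^k$ for $j=1,\ldots,2^{k-1}$, and which vanishes outside $[0,1]$. The induction step is immediate: on each monotone up-slope and down-slope of $T^{(k-1)}$, the outer $T$ sweeps its input from $0$ up to $1$ and back, so it introduces exactly one new peak, doubling the count. Since each composition layer adds $O(1)$ ReLU units and unit depth, the network computing $T^{(k)}$ has $O(k)$ units and depth $O(k)$.

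To obtain $\TW_F$ on $\{0,\ldots,nF\}$, I would pick $k=\lceil\log_2 n\rceil$ so that $T^{(k)}$ provides at least $n/2$ peaks, and then apply affine pre- and post-scaling: set $\TW_F(x):=F\cdot T^{(k)}\!\left(x/(2^k F)\right)$. The pre-scaling stretches the horizontal period from $1/2^{k-1}$ to exactly $2F$, so the peaks land at $x=F,3F,5F,\ldots$ and the zero crossings at $x=0,2F,4F,\ldots$; the post-scaling $F\cdot(\,\cdot\,)$ brings the peak heights up to $F$. A direct check against the piecewise definition $\TW_F(x)=T_F(x\bmod 2F)$ verifies agreement on every integer input in $\{0,\ldots,nF\}$. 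The pre- and post-scaling each contribute only one affine layer, so the total network still has $O(\log n)$ units in depth $O(\log n)$.

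The main subtlety is less a genuine obstacle than a precision accounting exercise: the argument to the composed tent is the dyadic rational $x/(2^k F)$, and maintaining exactness of the ReLU composition at such rationals requires $O(\log(nF))=O(\log n)$ bits of precision, which matches the precision budget used throughout the paper. Since each composition step is affine followed by a ReLU, that precision is preserved and the resulting integer outputs coincide exactly with $\TW_F$ on the entire input domain. This completes the sketch, recovering Lemma~3.8 and Corollary~3.9 of~\cite{telgarsky2016}.
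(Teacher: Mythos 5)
The paper does not prove this statement itself; it cites Lemma~3.8 and Corollary~3.9 of Telgarsky and defers to that source. Your proposal is a correct and faithful reconstruction of exactly that argument: the three-ReLU tent $T(x)=2\,\mathrm{ReLU}(x)-4\,\mathrm{ReLU}(x-1/2)+2\,\mathrm{ReLU}(x-1)$, the inductive doubling of peaks under $k$-fold self-composition, and the affine pre/post-scaling by $1/(2^kF)$ and $F$ to stretch the sawtooth onto $\{0,\ldots,nF\}$ with the correct period $2F$ and amplitude $F$ are precisely Telgarsky's iterated-triangle construction, with $O(1)$ units per layer giving $O(\log n)$ units and depth. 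One small slip: $x/(2^kF)$ is not a dyadic rational once $F$ is an odd prime, so "dyadic" is the wrong word, but the precision accounting you intend is still right — all intermediate values are rationals with numerator and denominator bounded by $\mathrm{poly}(nF)$, so $O(\log(nF))=O(\log n)$ bits suffice, which matches the precision budget used throughout the paper.
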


We now reduce the modulo $F$ case to the construction of the triangular wave function.

\modF*

\begin{proof}
    Assume without loss of generality that $F$ is odd. Given an integer $z \in \Z$, we first compute $\TW_F(z)$ and $\TW_{F/2}(z)$. Note that even though $F/2$ is non-integral, we can still compute it. Looking for a general real number $z \in [0, 2F]$, we have that $\TW_F(z) = \TW_{F/2}(z)$ if and only if $z \in [0, F/2]$ or $z \in [3F/2, 2F]$. In addition, if $z$ is an integer in $[0, 2F]$ but not in $[0, F/2] \cup [3F/2, 2F]$, then $\TW_F(z)-\TW_{F/2}(z) \ge 1$. Noting that $\text{ReLU}(1-\text{ReLU}(1-x)) = 0$ for $x = 0$ and $\text{ReLU}(1-\text{ReLU}(1-x)) = 1$ for any $x \ge 1,$ we therefore have that 
    \[\text{ReLU}(1-\text{ReLU}(1-\TW_F(z)+\TW_{F/2}(z)))\]
    equals $0$ if $z \in \Z$ and $0 \le (z \bmod{2F}) \le (F-1)/2$ or $(3F+1)/2 \le (z \bmod{2F}) \le 2F-1$, and equals $1$ if $z \in \Z$ and $(F+1)/2 \le (z \bmod{2F}) \le (3F-1)/2$. So, if we consider shifting $z$ by $(F-1)/2$, we can make this equal $1$ if and only if $z$ is between $F$ and $2F-1$ mod $F$. Therefore, for integers $z$,
    \begin{align*}
        &\text{ReLU}\left(1-\text{ReLU}\left(1-\TW_F\left(z-\frac{F-1}{2}\right)+\TW_{F/2}\left(z-\frac{F-1}{2}\right)\right)\right)  \\
        & = \begin{cases}0 & 0 \le (z \bmod{2F}) \le F-1 \\ 1 & F \le (z \bmod{2F}) \le 2F-1 \end{cases}.
    \end{align*}
    Now, for simplicity, let us define the function above as $g(z)$. Note that when $g(z) = 0$, then $\TW_F(z) = z \bmod{F}$, and when $g(z) = 1$, then $\TW_F(z) = F-(z \bmod{F})$, so $z\bmod{F} = F-\TW_F(z)$. Therefore, we have that for any integer $z$, $z \bmod{F} = |F \cdot g(z) - \TW_F(z)|$. But note that we can write 
    \[|x| = \max(x, -x) = \max(2x, 0) - x = \text{ReLU}(2x) - x.\] Therefore, we have that
\[z \bmod{F} = \text{ReLU}(2 F \cdot g(z) - 2 \cdot \TW_F(z)) - F \cdot g(z) + \TW_F(z). \qedhere\]
\end{proof}

\section{Omitted Proofs for Section \ref{sec:best_construction}}\label{sec:omitted_proofs_ub2}
\begin{proof}[Proof of Lemma \ref{lem:eps_biased}]
    Since $\langle a, x \rangle - \langle a, y \rangle = \langle a, x-y \rangle,$ by writing $z = x-y \in \Z^m$, it suffices to show that if $z$ is nonzero, then $\BP_{a \sim \mathcal{D}} [\langle a, z \rangle = 0] \le \frac{1}{2}+\frac{\eps}{2}$.
    
    Let $k$ represent the largest nonnegative integer such that $2^k|z_i$ for all $i \in [m] := \{1, 2, \dots, m\},$ and let $z' = z/2^k$. Then, by replacing $z$ with $z'$, we have that $\BP_{a \sim \mathcal{D}} [\langle a, z \rangle = 0]$ if and only if $\BP_{a \sim \mathcal{D}} [\langle a, z' \rangle = 0]$ (since we are just dividing by $2^k$) and $z'_i$ is odd for at least one value of $i \in [m]$. So, it suffices to show that $\BP_{a \sim \mathcal{D}} [\langle a, z' \rangle = 0] \le \frac{1}{2}+\frac{\eps}{2}$. Note that by reducing modulo $2$, it suffices to show that over $\GF_2,$ $\BP_{a \sim \mathcal{D}} [\langle a, z' \rangle \equiv 0 \bmod 2] \le \frac{1}{2}+\frac{\eps}{2}$, because then the probability over the integers is either the same or lower.
    
    Let $I$ be the subset of $i \in [m]$ for which $z'_i$ is odd. Note that $I$ is nonempty since we know $z'_i$ is odd for at least one value of $i \in [m]$. So, by the definition of $\eps$-biased sample spaces, we know that over $\GF_2$, $\left|2 \cdot \BP_{a \in \mathcal{D}} \left[\sum_{i \in I} a_i = 0\right] - 1\right| \le \eps$, which means that $\BP_{a \in \mathcal{D}} \left[\sum_{i \in I} a_i = 0\right] \le \frac{1}{2} + \frac{\eps}{2}$. But indeed $\sum_{i \in I} a_i \equiv \langle a, z'\rangle$ since $i \in I$ precisely when $z'_i$ is odd, so this completes the proof.
\end{proof}

\subsection{Proof of Corollary \ref{cor:eps_biased}}
We first need to define the circuit class TC$^0$.

\begin{definition}[Threshold Gate]\label{def:tc0}
For inputs $x_1, \ldots, x_m \in \{0,1\}$ the output of a threshold gate, $\textup{TH}$, is
\[\textup{TH}(x_1,\ldots, x_m) = \begin{cases}
  1  & \sum_{i=1}^m a_i x_i \ge \theta \\
  0 &  \text{ otherwise}
\end{cases}\]
where $\theta, a_1,\ldots,a_m \in \mathbb{Z}$. $\theta, a_1, \ldots, a_m$ may depend on $n$ but they do not depend on the input $x_1 , \ldots, x_m$.
\end{definition}

\begin{definition}[TC$^0$ Circuit Class]
\textup{TC}$^0$ is the class of boolean functions computed by constant-depth $\poly(m)$-size circuits with threshold gates.
\end{definition}

It is known that $\eps$-biased vectors can be generated using an efficient circuit in \textup{TC}$^0$.

\begin{theorem}[Theorem 14 in \cite{healy2006constant}, Restated]\label{thm:tc0}
Let $s = O(\log F + \log(1/\eps))$. For every $\eps$ and $F$, there exists an explicit \textup{TC}$^0$ circuit $C:\{0,1\}^s \cup \{0,1\}^{\lceil log_2 F \rceil} \rightarrow \{0,1\}$ which takes as input $s$ uniform random bits and an index $i \in [F]$ and outputs the $i$th coordinate of an $\eps$-biased vector in $\{0,1\}^F$. $C$ uses $\poly(s)$ threshold gates.
\end{theorem}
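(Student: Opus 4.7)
The statement is a restatement of the classical Healy--Viola result, so the plan is to sketch the standard $\eps$-biased generator based on finite-field arithmetic and verify that each ingredient is realizable by a constant-depth threshold circuit of size $\poly(s)$.

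First I would present the construction explicitly. Set $k := \lceil\log_2 F\rceil + \lceil\log_2(1/\eps)\rceil + O(1)$, so the seed length is $s = 2k = O(\log F + \log(1/\eps))$. Fix an irreducible polynomial $q(x)\in\GF(2)[x]$ of degree $k$, identifying $\GF(2^k)$ with binary vectors of length $k$. Parse the seed as a pair $(\alpha,\beta)\in\GF(2^k)^2$ and define the $i$-th output bit by
\[ a_i \;:=\; \langle \alpha^i,\beta\rangle \pmod 2, \]
the $\GF(2)$ inner product in the coordinate basis. This is essentially the Alon--Goldreich--H{\aa}stad--Peralta generator.

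Next I would verify the bias. For any nonempty $I\subseteq[F]$, by linearity
\[ \sum_{i\in I} a_i \;=\; \bigl\langle p(\alpha),\,\beta\bigr\rangle \pmod 2, \qquad p(x) := \sum_{i\in I} x^i. \]
Since $p$ is a nonzero polynomial of degree less than $F\le 2^k$, it has fewer than $F$ roots in $\GF(2^k)$, so $\Pr_\alpha[p(\alpha)=0] < F/2^k$; conditional on $p(\alpha)\neq 0$, the inner product with a uniform $\beta$ is uniform in $\GF(2)$. A direct calculation gives bias at most $F/2^k < \eps$, as required.

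The TC$^0$ implementation of $(\alpha,\beta,i)\mapsto a_i$ consists of three stages: (i) compute $\gamma := \alpha^i$ in $\GF(2^k)$; (ii) compute $\gamma\cdot\beta$ in $\GF(2^k)$; (iii) output the parity of the resulting $k$ bits. Stages (ii) and (iii) are easy TC$^0$ computations of size $\poly(k)$: a single $\GF(2^k)$ multiplication is $k^2$ AND gates producing a degree-$\le 2k\!-\!2$ polynomial over $\GF(2)$, followed by reduction modulo the fixed polynomial $q(x)$ (an affine map over $\GF(2)$, i.e., a layer of XORs implementable by threshold gates), and parity is a single threshold-gate computation.

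The main obstacle, and the entire technical content of the Healy--Viola paper, is Stage (i): iterated multiplication $\alpha^i$ in $\GF(2^k)$ for $i$ as large as $F = 2^{\Theta(k)}$. Naive repeated squaring has depth $\Theta(\log i) = \Theta(k)$, which is not constant. The Healy--Viola argument removes this barrier by reducing iterated multiplication in $\GF(2^k)$ to iterated multiplication of $\poly(k)$ many $\poly(k)$-bit integers (which is in TC$^0$ by Hesse--Allender--Barrington) and then performing a constant-depth reduction modulo $q(x)$. Substituting this Stage (i) circuit into the three-stage pipeline yields a constant-depth threshold circuit of size $\poly(k)=\poly(s)$ realizing the generator, which is exactly the statement.
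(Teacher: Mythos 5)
The paper does not actually prove this statement: Theorem~\ref{thm:tc0} is imported verbatim as a black box (``Theorem 14 in \cite{healy2006constant}, Restated''), so there is no internal proof to compare against. Your sketch is a faithful reconstruction of what that reference does: the generator you describe is the standard Alon--Goldreich--H{\aa}stad--Peralta powering construction, your bias calculation (a nonzero polynomial of degree $<F$ over $\GF(2^k)$ has fewer than $F$ roots, and conditioned on $p(\alpha)\neq 0$ the inner product with uniform $\beta$ is unbiased) is correct and gives bias at most $F/2^k\le\eps$, and you correctly localize the only nontrivial circuit-complexity issue in computing $\alpha^i$ for $i$ up to $2^{\Theta(k)}$ in constant depth, which is precisely the technical contribution of Healy--Viola via the reduction to iterated integer multiplication (Hesse--Allender--Barrington). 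Since the statement being proved \emph{is} the cited theorem, deferring that stage back to the source is the honest thing to do, though it does mean your argument is a guided tour of the reference rather than an independent proof. One small inaccuracy: the parity of $k$ bits is not ``a single threshold-gate computation'' (parity is not a linear threshold function for $k\ge 2$); it needs a depth-2 threshold circuit with $O(k)$ gates, and likewise each XOR in the reduction modulo $q(x)$ costs a small constant-depth subcircuit rather than one gate. This does not affect the $\poly(s)$ size or constant depth.
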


Note that the guarantees of Theorem \ref{thm:tc0} are not directly applicable since we need to use a ReLU network instead of threshold gates. Nevertheless, since the circuit $C$ guaranteed by Theorem \ref{thm:tc0} has integer inputs in all gates, we can easily approximate each threshold gates using an appropriately scaled ReLU. This is a straightforward and known reduction but we briefly outline a procedure in Lemma \ref{lmm:thredhold}.

\begin{lemma}\label{lmm:thredhold}
Consider the threshold gate \textup{TH}: $\{0,1\}^m \rightarrow \{0,1\}$ which computes the threshold $\sum_{i=1}^m a_i x_i \ge \theta$. Assume that $a_i, \theta$ are all integers bounded by $\poly(m)$. \textup{TH} can be computed by a ReLU network using $O(\log m)$ bits of precision and a constant number of parameters.
\end{lemma}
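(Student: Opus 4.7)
\textbf{Proof plan for Lemma \ref{lmm:thredhold}.} The plan is to exploit the fact that, since the inputs $x_i$ are binary and the coefficients $a_i$ and threshold $\theta$ are integers, the quantity $S := \sum_{i=1}^m a_i x_i - \theta$ is always an integer. Consequently, the event $\{S \geq 0\}$ and its complement $\{S \leq -1\}$ are separated by a multiplicative margin of at least $1$, which allows a sharp ReLU indicator to be implemented with a constant-size sub-network.

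First, I would compute $S$ with a single affine layer: weights $a_1,\dots,a_m$ on the inputs and bias $-\theta$. Since $|a_i|,|\theta| = \poly(m)$, each of these weights requires only $O(\log m)$ bits to represent, establishing the precision bound. Next, I would apply the constant-size ReLU sub-network
\[
\mathrm{TH}(x_1,\dots,x_m) \;=\; \mathrm{ReLU}(S+1) \;-\; \mathrm{ReLU}(S).
\]
The verification is a short case analysis using integrality of $S$: if $S \geq 0$ then both ReLUs are active and the output equals $(S+1)-S = 1$; if $S \leq -1$ then both ReLUs output $0$ and the expression evaluates to $0$. No intermediate (non-integer) case arises, so the output matches the threshold gate exactly.

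Putting the two pieces together, the network consists of the affine layer computing $S$ followed by the two-unit ReLU gadget, so beyond the linear combination of the inputs it uses only $O(1)$ ReLU units and $O(1)$ depth, while all weights remain representable in $O(\log m)$ bits.

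I do not expect any real obstacle: the only conceptual point is to notice that the binary/integer structure of the inputs rules out fractional values of $S$, which is what lets the $\mathrm{ReLU}(S+1)-\mathrm{ReLU}(S)$ construction behave as an exact $0/1$ indicator rather than a soft approximation. Once that is observed, the rest is a one-line verification.
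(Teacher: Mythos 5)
Your proposal is correct and follows essentially the same approach as the paper: compute the affine quantity $S=\sum_i a_i x_i-\theta$ in a single linear layer (whose integer weights need only $O(\log m)$ bits), then exploit the integrality of $S$ on binary inputs to implement a sharp $0/1$ threshold with a constant-size ReLU gadget. The paper uses a two-ReLU nested composition (a gadget computing an exact integer step function, then shifted by $\theta$), while you use the equally valid difference $\mathrm{ReLU}(S+1)-\mathrm{ReLU}(S)$; these are interchangeable realizations of the same idea.
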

\begin{proof}
Consider the function 
\[g(x) = \text{ReLU}(-\text{ReLU}(-x+2)). \]
It is $0$ for all integers $x \le 0$ and $1$ for all integers $\ge 1$, i.e., it computes the threshold $``x \ge 0"$. By shifting and scaling $g$, we can now compute the threshold $``x \ge \theta"$ for any integer $\theta$. Finally, the sum $\sum_{i=1}^m a_i x_i$ can be computed using one additional layer. Since all parameters are integers, we only require $O(\log m)$ bits of precision to store the shifting and scaling factors.
\end{proof}

Lastly, we remark that as per the definition of a threshold gate in Definition \ref{def:tc0}, Theorem \ref{thm:tc0} requires the index $i \in [F]$ to be inputted as a binary string with its bits given on individual nodes. However, this presents a slight inconsistency with the statement of Theorem \ref{thm:tc0} and its corollary, Corollary \ref{cor:eps_biased} which is used in the construction of Section \ref{sec:best_construction}. Specifically, Step $3$ of the construction of Section \ref{sec:best_construction} outputs the actual integer $i \in [F]$ which we use as the index for our $\eps$-biased vector, which does not match the format required by Theorem \ref{thm:tc0}. This inconsistency is straightforward to fix without having any impact whatsoever in the asymptotic size complexity of the neural network. We simply take the integer $i$ outputted by Step $3$ of the construction and compute the $j$th bit of $i$ for all $1 \le j \le O(\log F)$ in parallel. The $j$th bit is exactly equal to $0$ if and only if $(i \bmod 2^{j+1})<2^j$ and $1$ otherwise. Note that $2^{j+1} = O(F)$ for all $j$ and we can easily compute each $\bmod 2^{j+1}$ by appealing to Theorem \ref{thm:modF}. This only requires $O(1)$ extra depth and an additional $O(\poly(\log n))$ hidden units and $O(\log n)$ bits of precision. The more general trade-off of Theorem \ref{thm:WL-GNN} also readily holds.

\section{Lower bounds}\label{sec:lower_bounds}
In this appendix we provide lower bounds on the complexity of graph neural networks that are able to simulate the WL test. We present both a communication complexity lower bound and a lower bound on the number of ReLU units of the GNN. More concretely, in Section~\ref{sec:low-bound-communication}, we prove that in order to maintain the invariant that  with at least some constant probability, nodes with isomorphic neighborhoods get the same label while nodes with non-isomorphic neighborhoods get different labels, some message sent between nodes must be of length at least $\Omega(\log n)$. This bound matches the upper bound of Theorem~\ref{thm:WL-GNN}.
Second, in Section~\ref{sec:low-bound-units}, we consider a more specific although still fairly general lower bound model which captures the implementation of the WL test using neural networks. We suppose that the messages sent between nodes are $t$-dimensional vectors with integral entries. We moreover suppose that each node combines its received messages by summing them to get a vector in $[F]^t$ (here, $[F]=\{0,1,\dots, F-1\}$) and applying a collectively agreed upon neural network $\phi$ with at most $H$ ReLU units to this sum.
We show that if the combination of summing neighborhoods and applying the neural network maps distinct multisets to distinct elements with at least some constant probability, then $H=\Omega(\log F)$. Moreover, parametrizing in terms of the depth and width of the neural network, we obtain a more fine-grained lower bound, demonstrating that for shallow neural networks, we need even more ReLU units.  In Remark~\ref{remark:NN-aggregate}, we point out that our lower bound holds even if the aggregation function $f$ is itself a neural network with a bounded number of ReLU units.
As a node in an $n$-node graph could have up to $n-1$ neighbours, we need at least $F=\Omega(n)$ in order to store the sum of the messages from the neighbors of the nodes. With this assumption, the lower bound thus becomes $\Omega(\log n)$ which matches our upper bound up to $\polylog (n)$  factors.  
It remains an interesting open problem to bridge the gap between the upper and lower bound. 

For both our lower bounds we assume that the nodes have access to an infinite public string of random bits. In Section~\ref{sec:low-bound-units}, this is the string which the nodes use to collectively agree on some neural network network $f$ with respect to some distribution on such networks with at most $H$ ReLU units.

\subsection{Lower Bound: Communication Complexity}\label{sec:low-bound-communication}

We consider a forest graph $G$ composed of pieces $G_1, G_1', G_2, G_2' \dots, G_m, G_m'$, for $m = \Theta(\sqrt{n})$. Each piece $G_k$ consists of a ``top'' node $u_k$, which is only connected to a ``middle'' node $v_k$, which in turn is connected to $k$ ``bottom'' nodes $w_{k, 1}, \dots, w_{k, k}$, and $G_k'$ is simply a duplicate of $G_k$ (with vertices $u_k', v_k'$, and $w_{k, j}'$ for $1 \le j \le k$). See Figure \ref{fig:forest_G} for a depiction of $G$. We note that after two rounds, each $u_k$ (and $u_k'$) should know the respective value of $k$, because the local graph of depth $2$ around $u_k$ is distinct for each $k \in [m]$.

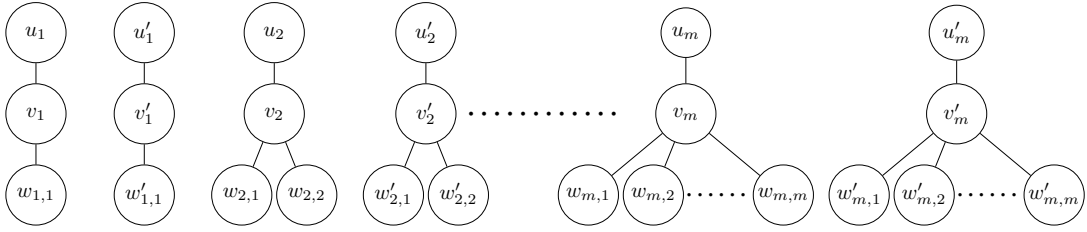
\begin{figure}
\begin{center}
\begin{tikzpicture}[scale=0.90]
\draw[white] (-0.5,-0.5) rectangle (15.5,3);
\begin{pgflowlevelscope}{\pgftransformscale{0.8}}

\node[draw, circle, minimum size=1cm] at (0, 3) (u1) {$u_1$};
\node[draw, circle, minimum size=1cm] at (0, 1.5) (v1) {$v_1$};
\node[draw, circle, minimum size=1cm, inner sep=0pt] at (0, 0) (w11) {$w_{1,1}$};

\draw[black] (u1) -- (v1);
\draw[black] (v1) -- (w11);

\node[draw, circle, minimum size=1cm] at (2, 3) (u1') {$u_1'$};
\node[draw, circle, minimum size=1cm] at (2, 1.5) (v1') {$v_1'$};
\node[draw, circle, minimum size=1cm, inner sep=0pt] at (2, 0) (w11') {$w_{1,1}'$};

\draw[black] (u1') -- (v1');
\draw[black] (v1') -- (w11');

\node[draw, circle, minimum size=1cm] at (4.4, 3) (u2) {$u_2$};
\node[draw, circle, minimum size=1cm] at (4.4, 1.5) (v2) {$v_2$};
\node[draw, circle, minimum size=1cm, inner sep=0pt] at (3.8, 0) (w21) {$w_{2,1}$};
\node[draw, circle, minimum size=1cm, inner sep=0pt] at (5.0, 0) (w22) {$w_{2,2}$};

\draw[black] (u2) -- (v2);
\draw[black] (v2) -- (w21);
\draw[black] (v2) -- (w22);

\node[draw, circle, minimum size=1cm] at (7.2, 3) (u2') {$u_2'$};
\node[draw, circle, minimum size=1cm] at (7.2, 1.5) (v2') {$v_2'$};
\node[draw, circle, minimum size=1cm, inner sep=0pt] at (6.6, 0) (w21') {$w_{2,1}'$};
\node[draw, circle, minimum size=1cm, inner sep=0pt] at (7.8, 0) (w22') {$w_{2,2}'$};

\draw[black] (u2') -- (v2');
\draw[black] (v2') -- (w21');
\draw[black] (v2') -- (w22');

\node at (9.4, 1.5) {$\large{\boldsymbol{\cdots \cdots \cdots \cdots}}$};

\node[draw, circle, minimum size=1cm, minimum size=0.5cm] at (12, 3) (um) {$u_m$};
\node[draw, circle, minimum size=1cm] at (12, 1.5) (vm) {$v_m$};
\node[draw, circle, minimum size=1cm, inner sep=0pt] at (10.2, 0) (wm1) {$w_{m,1}$};
\node[draw, circle, minimum size=1cm, inner sep=0pt] at (11.4, 0) (wm2) {$w_{m,2}$};
\node[draw, circle, minimum size=1cm, inner sep=0pt] at (13.8, 0) (wmm) {$w_{m,m}$};

\draw[black] (um) -- (vm);
\draw[black] (vm) -- (wm1);
\draw[black] (vm) -- (wm2);
\draw[black] (vm) -- (wmm);
\node at (12.6, 0) {$\boldsymbol{\cdots \cdots}$};

\node[draw, circle, minimum size=1cm, minimum size=0.5cm] at (17, 3) (um') {$u_m'$};
\node[draw, circle, minimum size=1cm] at (17, 1.5) (vm') {$v_m'$};
\node[draw, circle, minimum size=1cm, inner sep=0pt] at (15.2, 0) (wm1') {$w_{m,1}'$};
\node[draw, circle, minimum size=1cm, inner sep=0pt] at (16.4, 0) (wm2') {$w_{m,2}'$};
\node[draw, circle, minimum size=1cm, inner sep=0pt] at (18.8, 0) (wmm') {$w_{m,m}'$};

\draw[black] (um') -- (vm');
\draw[black] (vm') -- (wm1');
\draw[black] (vm') -- (wm2');
\draw[black] (vm') -- (wmm');
\node at (17.6, 0) {$\boldsymbol{\cdots \cdots}$};

\end{pgflowlevelscope}
\end{tikzpicture}
\end{center}
\caption{The graph $G$ used for Theorem \ref{thm:cc_lower}, showing the disconnected pieces $G_1, G_1', G_2, G_2', \dots, G_m, G_m'$. Note that for all $k$, $G_k$ and $G_k'$ are isomorphic.}
 \label{fig:forest_G}
\end{figure}

\begin{theorem} \label{thm:cc_lower}
    Suppose there exists a public random string $r$ that every node of $G$ has access to, and each node additionally has some independent private randomness. Suppose there is a communication protocol where by the end, with probability at least $3/4$, the following hold.
\begin{itemize}
    \item For every $k \in [m]$, the top nodes $u_k$ and $u_k'$ output the same value.
    \item For every $k \neq \ell \in [m]$, the top nodes $u_k, u_\ell$ output distinct values.
\end{itemize}
    Then, there must be some $k$ such that the edge $(u_k, v_k)$ or the edge $(u_k', v_k')$ has at least $\Omega(\log n)$ total bits of communication. Hence, if there are only $O(1)$ rounds of communication, one of those rounds must have sent $\Omega(\log n)$ bits of communication across the edge.
\end{theorem}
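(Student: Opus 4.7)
The plan is to reduce from the one-way randomized communication task where Alice holds an index $I\in[m]$ drawn uniformly at random and Bob must output $\hat{I}=I$ with probability at least $3/4$. A standard Fano argument shows that, even with a shared public random string, any such protocol requires $\Omega(\log m)$ bits of communication: writing $T$ for the transcript, $H(I\mid T,R_{\mathrm{pub}})\le 1+(1/4)\log(m-1)$ by Fano's inequality, so $I(I;T\mid R_{\mathrm{pub}})\ge(3/4)\log m-1$, forcing $\lvert T\rvert=\Omega(\log m)$.

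Given a protocol on $G$ satisfying the hypothesis of the theorem, I would turn it into a 2-party protocol for this task as follows. Alice, holding $I$, simulates the local role of node $u_I$, while Bob simulates every other node of $G$; both parties share the public string $r$, and each party draws the private randomness for the nodes it simulates. The only edge of $G$ crossing the Alice-Bob partition is $(u_I,v_I)$, so the only messages the two parties must exchange during the simulation are those that the assumed protocol sends along this edge. After the simulation Alice holds the final label $O_{u_I}$ and Bob has computed $O_{u_k'}$ for every $k$. Alice then sends $O_{u_I}$ to Bob, who outputs $\hat{I}$ equal to the unique $k$ satisfying $O_{u_k'}=O_{u_I}$ (with arbitrary tie-breaking). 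Whenever both of the theorem's guarantees hold---$O_{u_j}=O_{u_j'}$ for all $j$ and $O_{u_j}\neq O_{u_I}$ for $j\neq I$---which happens jointly with probability at least $3/4$, the match is unique and equal to $I$, so $\hat{I}=I$.

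Let $c_I$ be the total bits exchanged on edge $(u_I,v_I)$ during the simulation and $s_I=\lvert O_{u_I}\rvert$ the length of the final label. The reduction uses $c_I+s_I$ bits of communication, and by the $\Omega(\log m)=\Omega(\log n)$ lower bound (using $m=\Theta(\sqrt n)$) we obtain $c_I+s_I=\Omega(\log n)$. In the message-passing model each round transmits the current node label across every incident edge, so in particular $u_I$'s final label is (up to one round's update) a message that was sent across $(u_I,v_I)$, giving $s_I\le c_I$; combining the two bounds yields $c_I=\Omega(\log n)$ as claimed. Running the symmetric reduction with Alice playing $u_I'$ handles the $(u_I',v_I')$ edges, and the corollary about $O(1)$-round protocols follows by pigeonhole over rounds and directions. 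The main obstacle I expect is the last step: relating the final label size to the edge communication is straightforward in the label-passing GNN model, but in a fully general communication protocol where outputs need not mirror transmitted messages one has to argue that any bits of $O_{u_I}$ which actually depend on $I$ must have flowed through $(u_I,v_I)$, so that $s_I$ can be charged to $c_I$ up to constants.
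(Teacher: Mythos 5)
Your reduction is set up backwards in a way that makes it vacuous. You have Alice simulate $u_I$ (with private input $I$) and Bob simulate every other node of $G$. But in a two-party protocol the partition of work must either be fixed in advance or be known to both sides once fixed; in your setup, Bob is explicitly told which nodes he is \emph{not} simulating, so he learns $I$ for free before any message is exchanged. The claimed $\Omega(\log m)$ lower bound for the task ``Alice holds $I$, Bob outputs $I$'' then gives nothing, because Bob does not need the communication to recover $I$. There is also a second asymmetry: having Alice play $u_I$ does not use the fact that Alice knows $I$ at all --- every top node is a generic degree-one leaf and behaves identically. The side that actually ``knows'' $k$ is $v_k$ (through its degree), not $u_k$.

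A salvageable version would flip the roles: Alice, knowing $I$, simulates everything except $u_I$; Bob simulates $u_I$ as a generic leaf, oblivious to $I$. The only cross edge is $(u_I, v_I)$, and Bob can recover $I$ from $O_{u_I}$ alone (he can precompute the high-probability output $\tilde o_k$ for each $k$ from the protocol and public string, then match), so Alice never needs to ship $O_{u_I}$ across. That removes the extra $s_I$ term and the ``charge $s_I$ to $c_I$'' step you flagged as the obstacle, which is indeed not justified for a general communication protocol (outputs are held, not sent). Note also that even the fixed reduction needs the observation --- which the paper proves and you did not --- that the ``same output as the twin'' condition forces a single output to occur with probability $\ge 3/4$: you argue by matching $O_{u_I}$ against $O_{u_k'}$, but the probability that the \emph{random} pair $(O_{u_I},O_{u_I'})$ collides being $\ge 3/4$ does not directly give a deterministic $\tilde o_I$ without the $\sum_o p_I(o)^2 \ge 3/4 \Rightarrow \max_o p_I(o)\ge 3/4$ step.

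For contrast, the paper avoids the two-party framing entirely. It first reduces to one-way communication (since $v_k$ can simulate $u_k$), fixes a good public string, and then argues directly: the condition that $u_k,u_k'$ agree with probability $\ge 3/4$ forces a canonical output $\tilde o_k$ and hence a canonical message $s_k \in \{0,1\}^b$ achieving it; the ``distinct outputs'' condition forces the $\tilde o_k$ and hence the $s_k$ to be pairwise distinct; pigeonhole gives $2^b \ge m$. No Fano, no auxiliary two-party task, and no need to relate output size to edge communication.
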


\begin{remark}
In comparison to Definition \ref{def:one_iteration}, we note that our lower bound works for $p = \frac{1}{4m^2} = \Theta\left(\frac{1}{n}\right)$ in Definition \ref{def:one_iteration}. This is because we require all nodes $u_k, u_\ell$ to simultaneously have different outputs with probability at least $\frac{3}{4},$ which is implied by a union bound if every $u_k, u_\ell$ for $k \neq \ell$ have different outputs with probability at least $1-\frac{1}{4m^2}$.
In addition, we actually prove a stronger lower bound against Definition \ref{def:one_iteration}, because our lower bound holds even if we allow each node to have its own independent private randomness,
and only requires nodes with the same local neighborhood to output the same answer simultaneously with probability $3/4$ instead of probability $1$.
\end{remark}

\begin{proof}
First, we note that we may assume the communication is one-way from $v_k$ to $u_k$. This is because the node $v_k$ can simulate all communication from $u_k$, as $u_k$ has no information about neighbors apart from $v_k$. So, we just need to show the one-way communication complexity is $\Omega(\log n)$.
Next, we will assume there is no public randomness - we will remove this assumption at the end.
So, each $u_k$ (resp., $u_k'$) receives at most $b$ bits of information from $v_k$ (resp., $v_k'$).
If $v_k$ sends a randomized message of length $b$ to $u_k$ and $u_k$ uses this message to produce some output $o_k$, with probability at least $\frac{3}{4}$ the outputs $o_k$ must all be distinct. In addition, for each $k$, the outputs of the duplicate copies of $u_k$ must be the same with probability at least $\frac{3}{4}$. Our goal is to show that $b = \Omega(\log n).$

Let $f$ be a randomized function from $[m]$ to $\{0, 1\}^{b}$, representing the randomized message $v_k$ sends to $u_k$ assuming it has full knowledge of its number of neighbors. 
Let $g$ be a randomized function from $\{0, 1\}^{b}$ to some arbitrary output space $\mathcal{O}$, which represents the final output of $u_k$ after it has seen the message from $v_k$.
Then, for all $k \neq \ell \in [m]$, $\BP[g_1(f_1(k)) \neq g_2(f_2(\ell))] \ge \frac{3}{4}$, but for all $k \in [m]$, $\BP[g_1(f_1(k)) = g_2(f_2(k))] \ge \frac{3}{4}$. Here, $g_1, g_2$ represent the function $g$ with different instantiations of the randomness, and $f_1, f_2$ represent the function $f$ with different instantiations of the randomness.

Fix some $k \in [m]$ and for each output $o \in \mathcal{O}$, let $p_k(o)$ represent the probability of outputting $g(f(k)) = o$. Note that $p_k(o)^2$ is the probability that $u_k$ outputs $o$ \emph{and} $u_k'$ outputs $o$, so the probability that $u_k$ and $u_k'$ have the same output equals $\sum_{o \in \mathcal{O}} p_k(o)^2$, which we are assuming is at least $\frac{3}{4}$. This means that $\max_{o \in \mathcal{O}} p_k(o) = \sum_{o \in \mathcal{O}} p_k(o) \cdot \max_{o \in \mathcal{O}} p_k(o) \ge \sum_{o \in \mathcal{O}} p_k(o)^2 \ge \frac{3}{4}.$ Therefore, for all $k \in [m]$, there exists an output $\tilde{o}_k$ such that $\BP[g(f(k)) = \tilde{o}_k] \ge \frac{3}{4}$.
Therefore, there must exist a value $s_k \in \{0, 1\}^{b}$ such that $\BP[g(s_k) = \tilde{o}_k] \ge \frac{3}{4}$. Indeed, if not, then for any distribution over $s_k \in \{0, 1\}^b$, we have that $\BP[g(s_k) = \tilde{o}_k] < \frac{3}{4}$, which means that $\BP[g(f(k)) = \tilde{o}_k] < \frac{3}{4}$.
In addition, $\tilde{o}_k$ is different across all $k$, as if $\tilde{o}_k = \tilde{o}_\ell$ for some $k \neq \ell$, then $\BP[g(f(k)) = g(f(\ell)) = \tilde{o}_k] \ge \left(\frac{3}{4}\right)^2 \ge \frac{1}{2}$, which means that  $\BP[g(f(k)) \neq g(f(\ell))] \le \frac{1}{2}$. 

But now, note that the $s_k$'s must be distinct, because if $s_k = s_{\ell}$, then because $\BP[g(s_k) = \tilde{o}_k] \ge \frac{3}{4}$ and $\BP[g(s_\ell) = \tilde{o}_\ell] \ge \frac{3}{4}$, this means that $\tilde{o}_k = \tilde{o}_\ell$.
Therefore, $s_1, \dots, s_m$ are all distinct. But since each $s_i$ lies in $\{0, 1\}^{b}$, this means that $b = \Omega(\log m) = \Omega(\log n)$, as desired.

To finish, we revisit the fact that we assumed there was no public randomness. Let us reintroduce the random string $r$ that every node of $G$ is given. We assume that with probability at least $3/4$, the top nodes $u_k, u_k'$ have the same output for all $k \in [m]$ and that the nodes $u_1, \dots, u_m$ output pairwise distinct values. But as this event happens with probability at least $3/4$ over a random string $r$, there must exist a choice of $r$ for which it happens with probability at least $3/4$ conditioned on $r$. But then we are back to the case where there is no public randomness, as desired.
\end{proof}

\subsection{Lower Bound: ReLU Units}\label{sec:low-bound-units}

\begin{figure}
\centering
\includegraphics{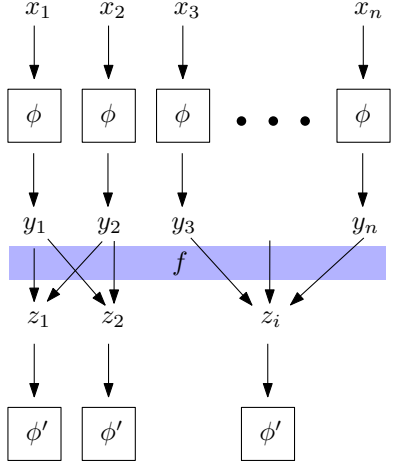}
\caption{The structure of the GNN in our lower bound model. For every node $i$, node $i$ receives the message $x_i$ which is the sum of the labels of the neighbors of $i$. The algorithm then picks a random $\phi\in \Phi_H$ with respect to $D$ and for each $i=1,\dots,n$, calculates $y_i=\phi(x_i)$ which is the new label of node $i$. Next, each node $i$, sends $y_i$ to each of its neighbors and every node $i$ then calculates $z_i=f(\{y_j\mid j \in \mathcal{N}(i)\})=\sum_{j\in \mathcal{N}(i)}y_i$. The $z_i$'s then form the node inputs at the next iteration to a new randomly chosen $\phi'\in \Phi_H$.}
\label{fig:NN-structure}
\end{figure}
We next prove a lower bound on the number of ReLU units that we need to implement the WL test as a graph neural network. To do so, we will use that any neural network $\phi: \R^t\to \R^\ell$ which uses ReLU's as activation functions induces a partition of $\R^t$ into convex polytopes $R_1,\dots,R_N$ such that $\phi$ restricted to each $R_i$ is just a linear function. Moreover, the number of such regions can be bounded using the following theorem.
\begin{theorem}[Proposition $4$ in \cite{montufar2014number}]\label{thm:linear-regions1}
The number of linear regions of any ReLU network $\phi:\R^t\to\R^d$ with a total of $H$ ReLU units is at most $2^H$.
\end{theorem}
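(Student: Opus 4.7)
The plan is to prove the bound by parameterizing linear regions by the activation patterns of the $H$ ReLU units. Define an \emph{activation pattern} as an assignment $\sigma \in \{0,1\}^H$ specifying, for each of the $H$ ReLU units in $\phi$, whether it is inactive (pre-activation $\le 0$, outputs $0$) or active (pre-activation $> 0$, outputs the identity). Trivially, there are at most $2^H$ such patterns. The goal is to show that (i) on each region where the pattern is constant, $\phi$ acts as an affine map, and (ii) the set of inputs producing a given pattern is convex, so every ``linear region'' in the sense of the statement corresponds to some $\sigma$.

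To establish (i), I would process the ReLU units in topological order and argue by induction on the depth that, once the pattern $\sigma$ is fixed, each ReLU output is an affine function of the input $x \in \R^t$: under $\sigma$, each unit either outputs $0$ (constant affine) or the identity applied to an affine combination of previous outputs (also affine by the induction hypothesis). Composing with the final linear readout layer, $\phi$ restricted to the inputs producing $\sigma$ is affine in $x$.

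To establish (ii), I would use the same inductive setup to show that the constraints ``the $k$-th ReLU has activation bit $\sigma_k$'' become linear inequalities in $x$ once we assume the pattern of all earlier units matches $\sigma$. More precisely, under the induction hypothesis the pre-activation of the $k$-th unit is an affine function $a_k^\sigma(x)$, and the constraint is $\pm a_k^\sigma(x) \le 0$ depending on $\sigma_k$. Hence the preimage $P_\sigma \subseteq \R^t$ of pattern $\sigma$ is an intersection of $H$ affine halfspaces, i.e., a (possibly empty) convex polyhedron.

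Finally I would conclude: every $x \in \R^t$ induces some pattern $\sigma(x)$, so the polyhedra $\{P_\sigma\}_{\sigma \in \{0,1\}^H}$ cover $\R^t$; by (i) $\phi$ is affine on each $P_\sigma$; and by (ii) each $P_\sigma$ is convex. Any maximal connected region on which $\phi$ is affine therefore lies inside some $P_\sigma$, giving at most $2^H$ linear regions. The only mild subtlety is handling the boundary case where a pre-activation equals exactly $0$: these lie on a finite union of hyperplanes and thus affect only a measure-zero set, not the count of full-dimensional linear regions. I do not anticipate any serious obstacle; the argument is essentially a bookkeeping of ReLU states, which is why the result reduces to a short proposition in \cite{montufar2014number}.
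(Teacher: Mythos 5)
The paper itself does not prove this statement; it is imported as a black box (Proposition~4 of Montúfar et al.\ \cite{montufar2014number}), and it is used only through the bound on the number of convex pieces in the proof of Theorem~\ref{thm:hidden-units-lower}. Your argument is exactly the standard activation-pattern proof of that cited result, so in that sense it matches the source. One small imprecision worth fixing: your final sentence asserts that ``any maximal connected region on which $\phi$ is affine therefore lies inside some $P_\sigma$,'' but that is false in general --- two adjacent cells $P_\sigma$ and $P_{\sigma'}$ can induce the same affine map (e.g.\ if a unit's incoming weights are zero, or outgoing weights cancel), so a maximal affine region can span several cells. The bound still goes through because the cells $\{P_\sigma\}$ partition $\R^t$, every maximal affine region is a union of such cells and hence contains at least one nonempty $P_\sigma$, and distinct maximal regions are disjoint; therefore the number of maximal regions is at most the number of nonempty cells, which is at most $2^H$. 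Also note that for the application in this paper (covering $\R^t$ by at most $2^H$ convex polytopes on each of which $\phi$ is linear, so that line segments $L_{a,b}$ mostly lie inside a single polytope), the partition into the $P_\sigma$ is precisely what is needed, and the ``maximal region'' framing is unnecessary; your parts (i) and (ii) already suffice.
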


Let $t,\ell,F\in \N$ be given and denote by $[F]=\{0,1,\dots,F-1\}$. Let $\Phi_H$ denote the family of neural networks $\phi: \R^t\to \R^{\ell}$ with at most $H$ ReLU units mapping. We will consider the following model (see Figure~\ref{fig:NN-structure}):
The algorithm designer picks a distribution $D$ over $\Phi_H$. 
Let $G$ be an arbitrary $n$-node graph and recall that $\mathcal{N}(i)$ denotes the neighborhood of node $i$ (including $i$ itself). For arbitrary inputs $x_1,\dots, x_n\in [F]^t$ to the $n$ nodes (which we think of as the sums of the messages that the nodes receive), the GNN operates as follows: Using the publicly available random string, the nodes collectively pick a random neural network $\phi\in \Phi_H$ with respect to $D$. For $i=1,\dots, n$, node $i$ then calculates $y_i=\phi(x_i)$ (which we think of as the new label of node $i$). Then each node $i$ sends $y_i$ to each node in its neighborhood $\mathcal{N}(i)$ and calculates $z_i=\sum_{j\in \mathcal{N}(i)}y_j$. We remark that this notation is different from what we introduced in Section~\ref{sec:prel} and~\ref{sec:tech-overview}. Referring back to those sections, $i$ corresponds to node $u$, $x_i$ corresponds to $\mathcal{S}_u^{(k-1)}$, i.e., the sum of the received messaged of node $u$ in iteration $k-1$, $\phi$ corresponds to $\phi^{(k)}$, $y_i$ corresponds to $h_u^{(k)}$, i.e., the new label of node $u$, and $z_i$ corresponds to $\mathcal{S}_u^{(k)}$, i.e., the sum of the received messaged of node $u$ in iteration $k$. We have made this switch in notation to make the argument that follows less unwieldy.

We would like our GNN to satisfy that for any $n$-node graph $G$, and arbitrary inputs $x_1,\dots,x_n$ to the nodes, it holds with probability at least $9/10$ over the randomness of $D$ that $z_i\neq z_j$ for all $i,j$ such that the multisets $\{x_k\mid k\in \mathcal{N}(i)\}$ and $\{x_k\mid k\in \mathcal{N}(j)\}$ are different. 
The following theorem provides a lower bound on the number of ReLU units $H$ needed for this property to hold.
\begin{theorem}\label{thm:hidden-units-lower}
Suppose that the neural networks in $\Phi_H$ have at most $H\leq \lg F-4$ ReLU units. Then there exists a graph $G$ on $n$ nodes and inputs $x_1,\dots,x_n\in [F]^t$ such that if $\mathcal{N}(1),\dots, \mathcal{N}(n)$ are the neighborhoods of the nodes of $G$, then with probability at least $1-\left(\frac{3}{4}\right)^{n/6}$, there exists $i,j\in[n]$ such that $z_i= z_j$ even though the multisets $\{x_k\mid k\in \mathcal{N}(i)\}$ and $\{x_k\mid k\in \mathcal{N}(j)\}$ are different.
Thus, to simulate the WL test with neural networks from $\Phi_{H}$, we need $H>\log F-3$ ReLU units.
\end{theorem}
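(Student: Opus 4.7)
The plan is to combine Theorem~\ref{thm:linear-regions1} with a probabilistic construction of the inputs. Fix any candidate network $\phi \in \Phi_H$ with $H \le \log F - 4$; by Theorem~\ref{thm:linear-regions1}, $\phi$ induces a partition of $\R^t$ into at most $2^H \le F/16$ convex regions on which $\phi$ acts as an affine map. The guiding observation is that whenever $\phi$ restricted to a region $R$ equals $x \mapsto Ax + c$, the identity $\phi(a) + \phi(b) = A(a+b) + 2c$ holds for all $a, b \in R$, so any two pairs $(a,b)$ and $(a', b')$ lying in a common region and satisfying $a+b = a'+b'$ yield $\phi(a) + \phi(b) = \phi(a') + \phi(b')$. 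Consequently, if two nodes $i, j$ have distinct neighborhood input multisets but those inputs all lie in a single linear region of $\phi$ and the two respective sums coincide, we obtain a collision $z_i = z_j$.

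To exploit this, I would take $G$ to be a disjoint union of $n/6$ gadgets, each consisting of three disjoint edges (six nodes per gadget). Within a gadget, the three edges are engineered to share a common pair-sum while the three pair-multisets are pairwise distinct, so any two of the three edges form a candidate colliding pair, provided their four endpoints fall in a common region of $\phi$. Although the theorem asserts existence of fixed inputs, I will obtain them by the probabilistic method: for each gadget $k$ independently, draw a uniformly random target sum $S_k \in [F]$ and three i.i.d.\ pairs $(a_k^{(j)}, b_k^{(j)}) \in [F]^2$ conditioned on $a_k^{(j)} + b_k^{(j)} = S_k$ for $j=1,2,3$, and assign them to the edges of gadget $k$.

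The core step is a per-gadget failure estimate: for every fixed $\phi \in \Phi_H$, the probability (over a single gadget's random inputs) that none of the $\binom{3}{2}$ candidate pairs has all four endpoints in a common region of $\phi$ is at most $3/4$. This uses the pigeonhole principle together with the region bound: since at most $F/16$ regions cover the $F$ elements of $[F]$, some region must contain at least $16$ of them, so conditional on $S_k$, a single random edge lies entirely inside such a region with constant probability, and three independent edges boost this to a $\ge 1/4$ probability of two edges sharing a region. Because gadgets are sampled independently, the events ``gadget $k$ has no colliding pair'' are mutually independent conditional on $\phi$, so $\BP_{\text{inputs}}[\text{no collision} \mid \phi] \le (3/4)^{n/6}$. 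Taking expectations over $\phi \sim D$ and invoking an averaging argument extracts fixed $G$ and inputs with $\BP_\phi[\text{no collision}] \le (3/4)^{n/6}$, from which the theorem follows.

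The main obstacle will be making the per-gadget $3/4$ bound go through uniformly against every partition of $\R^t$ into at most $F/16$ convex regions. The two endpoints of a single random edge are coupled by the constraint $a + b = S_k$, so showing that a random edge lands entirely inside a large region with constant probability requires a joint averaging over $S_k$ and over the random placement of $a$, exploiting the fact that the interval of admissible $a$'s for which both $a$ and $S_k - a$ lie in a given region is large precisely when the region straddles $S_k/2$. The precise constants ($3/4$, $n/6$, and the $-4$ slack in $H \le \log F - 4$) are calibrated so this calculation balances out, and minor adjustments to the gadget's size may be needed to make the constants line up cleanly.
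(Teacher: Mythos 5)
Your high-level architecture is sound: the observation that $\phi$ is affine on each of its $\le 2^H$ linear regions (Theorem~\ref{thm:linear-regions1}), hence preserves a shared pair-sum when both pairs lie in a common region, is exactly the engine the paper uses, and the closing Fubini/Yao step is correct. The gap is the per-gadget failure estimate: the claim that, for every fixed $\phi\in\Phi_H$ with $H\le\lg F-4$, a random gadget has two of its three i.i.d.\ sum-constrained edges landing in a common region with probability $\ge 1/4$ is false for your input distribution, and the pigeonhole reasoning you invoke does not establish it. Pigeonhole only guarantees a region containing $\ge F/2^H\ge 16$ points of $[F]$, which is a $16/F$ \emph{fraction}, not a constant fraction, so a random point lands in it with probability $O(1/F)$. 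Concretely, take $t=1$ and the ``slab'' partition of $[0,F-1]$ into $F/16=2^H$ intervals of width 16. A random edge $(a, S_k-a)$ has endpoints symmetric about $S_k/2$; both lie in the same width-16 slab only if $|a-S_k/2|\lesssim 8$, an event of probability $O(1/S_k)$ given $S_k$, and averaging over $S_k$ gives $\Pr[\text{edge inside one region}]=O(1/F)$. So the probability two of three edges share a region is $O(1/F)$, and your per-gadget bound $3/4$ fails against precisely the partition pigeonhole is supposed to handle.

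The paper sidesteps this by making the colliding inputs \emph{close together in input space} rather than realizing a fixed sum by generically far-apart endpoints. Its gadget is a pair of 3-node paths; the two middle nodes collide iff $\phi(3a,b)+\phi(3a+2,b)=2\phi(3a+1,b)$, and the three relevant points lie on a segment $L_{a,b}$ of length 2 along an axis-parallel line. By convexity, each region meets such a line in an interval with at most two boundary points, so at most $2\cdot 2^H F^{t-1}$ of the $\lfloor F/3\rfloor F^{t-1}$ disjoint segments cross any boundary, giving per-gadget failure $\le 12\cdot 2^H/F\le 3/4$ under $H\le \lg F-4$. If you want to retain your three-disjoint-edge gadget, the fix is to choose the inputs like the paper does, e.g.\ edge one has endpoints $(3a,b)$ and $(3a+2,b)$ and edge two has both endpoints equal to $(3a+1,b)$: then the multisets differ, the sums agree, and all four input points sit on a length-2 segment, so the counting argument carries over. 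As written, however, the i.i.d.\ random-pair choice breaks the per-gadget estimate and with it the whole proof. (Minor point: you should also argue the three pair-multisets within a gadget are distinct with high probability, but this is easy and secondary to the main issue.)
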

Before proving the theorem, we first explain how to interpret it as a lower bound for the computational complexity of implementing a WL iteration as in Definition~\ref{def:one_iteration} as a neural network. As an initial observation, note that in any iteration $k$, if for two nodes $u$ and $v$, the sums $S_u^{(k-1)}$ and $S_v^{(k-1)}$ are distinct (recall that $\mathcal{S}^{(k-1)}_u = \sum_{u' \in \mathcal{N}(u)} h^{(k-1)}_u$), then for the WL iteration to be successful, we must also have that $h^{(k)}_u\neq h^{(k)}_v$. This is because, $S_u^{(k-1)}\neq S_v^{(k-1)}$ implies that the multisets $\{h_{u'}^{(k-1)}\mid u'\in \mathcal{N}(u)\}$ and $\{h_{v'}^{(k-1)}\mid v'\in \mathcal{N}(v)\}$ are also distinct. But then Definition~\ref{def:one_iteration} yields that we need $h_u^{(k)}\neq h_v^{(k)}$ (at least with some probability $1-p$).
Now, consider two nodes $i=u$ and $j=v$ such that in some iteration of the WL test, the multisets of sums $\{\mathcal{S}_{u'}^{(k-1)}\mid u'\in \mathcal{N}(u)\}$ and $\{\mathcal{S}_{v'}^{(k-1)}\mid v'\in \mathcal{N}(v)\}$ are different.
This corresponds to the multisets $\{x_k\mid k\in \mathcal{N}(i)\}$ and $\{x_k\mid k\in \mathcal{N}(j)\}$ being different. We would like to argue that for the WL iteration to be successful according to Definition~\ref{def:one_iteration}, for each such pair of nodes $u,v$, we must have that also $\mathcal{S}_u^{(k)}\neq \mathcal{S}_v^{(k)}$ with some good probability (the sums of labels in the next iteration differ). Theorem~\ref{thm:hidden-units-lower} tells us that the probability of this happening is very low if we use too few ReLU units. Now why do we require that $\mathcal{S}_u^{(k)}\neq \mathcal{S}_v^{(k)}$ for such a pair of nodes $u,v$?

Since the multisets $\{\mathcal{S}_{u'}^{(k-1)}\mid u'\in \mathcal{N}(u)\}$ and $\{\mathcal{S}_{v'}^{(k-1)}\mid v'\in \mathcal{N}(v)\}$ are different, by the initial observation, the multisets $\{h_{u'}^{(k)}\mid u'\in \mathcal{N}(u)\}$ and $\{h_{v'}^{(k)}\mid v'\in \mathcal{N}(v)\}$ must also be distinct for the WL test to be successful. 
But since the multisets of labels $\{h_{u'}^{(k)}\mid u'\in \mathcal{N}(u)\}$ and $\{h_{u'}^{(k)}\mid v'\in \mathcal{N}(v)\}$ are distinct it follows by another application of Definition~\ref{def:one_iteration}, that we must also have that $h_u^{(k+1)}\neq h_v^{(k+1)}$. However, the only way this can happen is if $\mathcal{S}_{u}^{(k)}\neq \mathcal{S}_{v}^{(k)}$ as otherwise these two sums will be mapped to the same label by $\phi^{(k+1)}$. 

We remark that this lower bound applies to an isolated WL iteration rather than a full sequence of iterations. In particular, the inputs $x_1,\dots,x_n\in [F]^t$ (corresponding to the sums $\{\mathcal{S}_{u}^{(k-1)}\mid u\in G\}$) are adversarially chosen while in reality these inputs are not arbitrary but are the result of a prior WL iteration. Our construction in Section~\ref{sec:best_construction} indeed works against such adversarially chosen sums in the sense that different multisets of sums are mapped (via applying $\phi^{(k)}$ and summing the outputs for each multisets) to different sums with high probability, and as such our lower bound is exactly a lower bound for this harder problem. However, in general the sums $S_v^{(k-1)}$ are not adversarially chosen, and it would very be interesting to find a lower bound that does not require this assumption but works all the way from a graph and its initial labels.

\begin{proof}[Proof of Theorem~\ref{thm:hidden-units-lower}]
We exhibit a graph $G$ and a distribution $D_0$ over the possible inputs, such that for any neural network $\phi:[F]^t\to \R^{\ell}$ with at most $H$ ReLU units, if  $(x_1,\dots, x_n)\in([F]^t)^n$ is chosen with respect to $D_0$, then  with probability at least $1-\left(\frac{3}{4}\right)^{n/6}$, there exists $i,j$ such that $z_i=z_j$ even though the multisets $\{x_k\mid k\in \mathcal{N}(i)\}$ and $\{x_k\mid k\in \mathcal{N}(j)\}$ are different. It then follows from Yao's minimax principle that for any distribution $D$ over $\Phi_H$, there exists an input $x=(x_1,\dots,x_n)$ such that the same bad event occurs with the same high probability. As desired. 

\begin{figure}
\centering
\includegraphics[scale=.8]{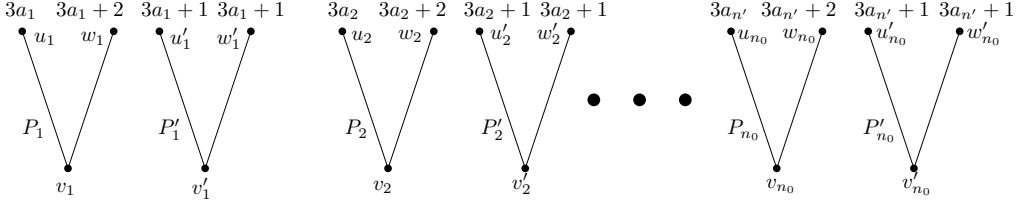}
\caption{The graph of our lower bound construction of Theorem~\ref{thm:hidden-units-lower}. It consists of $n/3$ copies of a path of length $3$ grouped into pairs of two. For each such pair of paths $(P,P')$, the two end nodes of $P$ are assigned (random) inputs (which we think of as the sum of their received messages) that are guaranteed to have the same sum as the sum of the inputs to the end nodes of $P'$. The idea of the proof is that if the neural network $\phi$ has too few ReLU unit, then this linear dependence is preserved with probability $\Omega(1)$ even after applying $\phi$.}
\label{fig:lower-bound-construction}
\end{figure}

Let us start out by describing the graph $G$. Assume with no loss of generality that $6$ divides $n$, i.e.~$n=6n_0$ for some natural number $n_0$. The graph $G$ simply consists of $2n_0$ copies of a path of length $2$ grouped into pairs $(P_1,P_1'),\dots,(P_{n_0},P_{n_0}')$ (see Figure~\ref{fig:lower-bound-construction}). Denote the vertices of path $P_i$ by $(u_i,v_i,w_i)$ and similarly the vertices of path $P_i'$ by $(u_i',v_i',w_i')$. We next proceed to describe the distribution $D_0$ over inputs $(x_1,\dots, x_n)\in([F]^t)^n$. Here, for each $i< n_0$, we let $x_{3i},x_{3i+1},x_{3i+2}$ be the inputs to nodes $u_i$, $v_i$ and $w_i$ respectively and $x_{3i+3},x_{3i+4},x_{3i+5}$ be the inputs to nodes $u_i'$, $v_i'$ and $w_i'$. To do so, let $\ell=\lfloor F/3\rfloor$ and define for each $0\leq a< \ell$ and $b\in [F]^{t-1}$ the set $F_{a,b}=\{(3a,b),(3a+1,b),(3a+2,b)\}$. Thus, $F_{a,b}\subseteq [F]^t$ consists of the three vectors which have their first coordinate equal to respectively $3a$,$3a+1$ and $3a+2$, and $b$ as their last $t-1$ coordinates. 
For each $i< n_0$, we independently pick uniformly random $0< a\leq \ell$ and $b\in [F]^{t-1}$. We then define $x_{6j}=(3a,b)$, $x_{6j+2}=(3a+2,b)$, and $x_{6j+3}=x_{6j+5}=(3a+1,b)$ (see Figure~\ref{fig:lower-bound-construction}).
We further define $x_{6j+1}=x_{6j+4}=0$ (the exact value is unimportant, as long as they are equal).  Note that $x_{6j}+x_{6j+1}+x_{6j+2}=x_{6j+3}+x_{6j+4}+x_{6j+5}$. In particular, if the random $\phi\in \mathcal F$ was a linear map, we would have that \begin{align}\label{eq:if-linear}
z_{6j+1}=\phi(x_{6j})+\phi(x_{6j+1})+\phi(x_{6j+2})=\phi(x_{6j+3})+\phi(x_{6j+4})+\phi(x_{6j+5})=z_{6j+4},
\end{align} in spite of the multisets, $\{x_{6j},x_{6j+1},x_{6j+2}\}$ and $\{x_{6j+3},x_{6j+4},x_{6j+5}\}$ being different. Now, $\phi$ is a neural network, so this identity does not need to hold. The idea is however, that if $\phi$ has only few ReLU units, then we obtain a good upper bound on the number of \emph{linear regions} by Theorem~\ref{thm:linear-regions1} and since $a$ and $b$ are random, the set $F_{a,b}$ is likely to be fully contained in one of these regions. And since $\phi$ restricted to this region in linear,~\eqref{eq:if-linear} holds in this case. 

To formalize this, assume that $\phi:[F]^t\to \R^{\ell}$ is a neural network in $\Phi_H$ with at most $H$ ReLU units. Using Theorem~\ref{thm:linear-regions1}, it follows that $[0,F-1]^t$ can then be partitioned into at most $2^H$ convex regions $R_1,\dots,R_N$, 
such that for each region $R_i$, it holds that $\phi$ restricted to $R_i$ is simply a linear function. For $0\leq a< \ell$ and $b\in [F]^{t-1}$, we let $L_{a,b}$ be the straight line segment connecting the two points $(3a,b)$ and $(3a+2,b)$. By convexity, for a fixed $R_i$ and a fixed $b\in [F]^{t-1}$, at most $2$ of the line segments in $\{L_{a,b}\mid 0\leq a<\ell\}$ can intersect the boundary of $R_i$. Since there are $F^{t-1}$ distinct choices for $b$ and $N\leq 2^H$ choices for $i$, this implies that the number of line segments $L_{a,b}$ that can cross \emph{any} of the boundaries of the convex regions is at most $2^{H+1}F^{t-1}$. Each of the remaining segments $L_{a,b}$ must be fully contained in some region $R_i$. There are $\lfloor F/3 \rfloor F^{t-1}$ choices of $a$ and $b$ in total, and thus, at least $\lfloor F/3 \rfloor F^{t-1}-2^{H+1}F^{t-1}$ of the segments $L_{a,b}$ 
are fully contained in one of the regions $R_i$, which means that $\phi$ restricted to $L_{a,b}$ acts as a linear map.
For such a segment $L_{a,b}$, we get by linearity that 
$$
\phi(3a,b)+\phi(3a+2,b)=2\phi(3a+1,b).
$$
In other words, if for a fixed $i\leq n_0$, $(a,b)$ is chosen such that $L_{a,b}$ is fully contained in one of the convex regions, then ~\eqref{eq:if-linear} is satisfied. 
It follows that for any given $i<n_0$,
\begin{align}\label{eq:regions}
\Pr[z_{6i+1}=z_{6i+4}]\geq 1-\frac{2^{H+1}F^{t-1}}{\lfloor F/3 \rfloor F^{t-1}}\geq 1- \frac{12\cdot 2^{H}}{F}.
\end{align}
Since the event $(z_{6i+1}=z_{6i+4})_{i\leq n_0}$ are independent (as we choose independent $(a,b)$ for each pair of paths $(P_i,P_i')$), it follows that
$$
\Pr[\exists 0 \leq i\leq n_0: z_{6i+1}=z_{6i+4}]\geq 1-\left(\frac{12\cdot 2^H}{F}\right)^{n_0}.
$$
If in particular, $H\leq \lg F-4$, we obtain that 
$$
\Pr[\exists 0 \leq i\leq n_0: z_{6i+1}=z_{6i+4}]\geq 1-\left(\frac{3}{4}\right)^{n/6},
$$
As the multisets $\{x_k\mid k\in \mathcal{N}(6i+1)\}$ and $\{x_k\mid k\in \mathcal{N}(6i+4)\}$ are different, this completes the proof.
\end{proof}
\begin{remark}\label{remark:NN-aggregate}
In analogue with our construction in Section~\ref{sec:first_construction} and Section~\ref{sec:best_construction}, we assumed in the above proof that the aggregate function $f$ is the summation function. As such, $f$ is just another neural network but without a single ReLU unit. We can therefore think of the combined computation performed by $f$ and $\phi$ (illustrated in Figure~\ref{fig:lower-bound-construction}) as the result of applying a single neural network. It follows from this observation (and the proof
of Theorem~\ref{thm:hidden-units-lower}) that in the more general setting where $f$ is a function in $\Phi_{H_1}$ and where the neural network $\phi\in\Phi_{H_2}$, then we must have that $H_1+H_2> \log F-4$  in order to successfully simulate the WL test.
\end{remark}

\subsubsection{Better bounds on the number of linear regions.}\label{sec:more-units-results}

The proof of Theorem~\ref{thm:hidden-units-lower} used that the number of convex linear regions of any neural network with at most $H$ ReLU's is at most $2^H$. However, in many cases one can obtain better upper bounds on the number of such regions, and this directly translates to a better lower bound than the one given in Theorem~\ref{thm:hidden-units-lower}. Indeed, if the family of neural networks $\Phi$ satisfies that the domain of any $\phi\in \Phi$ can be partitioned into at most $K$ convex regions such that $\phi$ restricted to each of these regions is linear, then the lower bound in \eqref{eq:regions} instead becomes
$$
\Pr[z_{6i+1}=z_{6i+4}]\geq 1- \frac{12\cdot K}{F}.
$$
In particular, when using independence of the events $(z_{6i+1}=z_{6i+4})_{i< n_0}$, we just need $K\leq \frac{F}{24}$, say, to get that an error occur with probability at least $1-2^{-\Omega(n)}$. Plugging in the bound $K\leq 2^H$ of Theorem~\ref{thm:linear-regions1} gave the desired bound of Theorem~\ref{thm:hidden-units-lower} which led to the $\Omega(\log F)$ lower bound. If we instead use the more fine grained theorem below, we obtain better bounds for shallow neural networks with low input dimension as stated in Corollary~\ref{cor:low-hidden-units2}. 

\begin{theorem}[Theorem 1 in \cite{raghu2017expressive}]\label{thm:linear-regions2}
Any ReLU neural network with input dimension $t$, width $w$, and depth $d$ has at most $O(w^{d \cdot t})$ linear regions.
\end{theorem}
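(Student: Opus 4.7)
The plan is to proceed by induction on the depth $d$, using the classical hyperplane arrangement bound as the key counting tool. Recall that $n$ affine hyperplanes in $\R^t$ partition the space into at most $\sum_{i=0}^{t}\binom{n}{i} = O(n^t)$ full-dimensional regions (for $n \geq t$). I will apply this bound layer by layer to the induced hyperplane arrangements in the $t$-dimensional input space.

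For the base case $d=1$, each of the $w$ ReLU units in the single hidden layer has an activation boundary given by a hyperplane $\{x \in \R^t : \langle a_i, x\rangle + b_i = 0\}$. Fixing the sign pattern of all $w$ units determines a region in which every ReLU acts as a fixed linear function, so the total number of linear regions is at most the number of cells in this arrangement of $w$ hyperplanes, i.e.\ $O(w^t)$.

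For the inductive step, assume the statement holds for networks of depth $d-1$, and consider a network of depth $d$. Within any fixed linear region $R \subseteq \R^t$ of the first $d-1$ layers, the output of layer $d-1$ is an affine function $L_R : R \to \R^w$ of the original input. The pre-activation of each of the $w$ units in layer $d$ is an affine function of $L_R(x)$, and hence an affine function of $x$ itself; its zero set therefore defines a hyperplane in $R \subseteq \R^t$. These $w$ new hyperplanes subdivide $R$ into at most $O(w^t)$ sub-regions. Multiplying this factor of $O(w^t)$ by the inductive bound $O(w^{(d-1)t})$ on the number of depth-$(d-1)$ regions yields the desired bound $O(w^{dt})$.

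The crucial technical point — and the main pitfall to avoid — is that even though the layer-$d$ pre-activations live in $\R^w$, their decision boundaries \emph{restricted to the original $t$-dimensional input space} are only hyperplanes in $\R^t$, not in $\R^w$. It is exactly this restriction that replaces a naive $O(w^w)$-per-layer count with $O(w^t)$ per layer and produces the stated $O(w^{dt})$ bound; the argument must therefore track the arrangement inside the $t$-dimensional domain rather than in the intermediate representation space. Once this viewpoint is fixed, the induction is routine.
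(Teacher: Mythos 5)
The paper does not prove this theorem; it is imported verbatim as Theorem~1 of Raghu et al.\ \cite{raghu2017expressive}, so there is no in-paper proof to compare against. Your inductive hyperplane-arrangement argument is correct and is essentially the standard proof of this bound as it appears in that work (and in the earlier lines of Pascanu--Mont\'ufar--Bengio): the key observation that the layer-$d$ decision boundaries, pulled back to a fixed linear region of the preceding layers, are hyperplanes in the original $t$-dimensional input space (not in the $w$-dimensional intermediate space) is exactly what yields the per-layer factor $\sum_{i=0}^{t}\binom{w}{i} = O(w^{t})$ instead of something exponential in $w$, and iterating over $d$ layers gives $O(w^{dt})$.
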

\begin{corollary}\label{cor:low-hidden-units2}
Let $\Phi_{t,d,w}$ consist of all ReLU neural networks with input dimension $t$, depth $d$, and width $w$. Suppose that we are in the setting of Theorem~\ref{thm:hidden-units-lower}, except that the neural network is picked from $\Phi_{t,d,w}$. 
Then the conclusion of the theorem holds as long as $w^{d \cdot t}\leq c F$ for a small enough constant $c$. In particular, to simulate the WL test with neural networks from $\Phi_{t,d,w}$, we need $dw=\Omega (dF^{\frac{1}{dt}})$ ReLU units.
\end{corollary}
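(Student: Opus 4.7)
The plan is to essentially rerun the proof of Theorem~\ref{thm:hidden-units-lower} with Theorem~\ref{thm:linear-regions1} replaced by the sharper bound of Theorem~\ref{thm:linear-regions2}. The graph $G$, the input distribution $D_0$, and the key counting argument with the line segments $L_{a,b}$ all depend only on the number $K$ of convex regions into which $\phi$ partitions its input domain (with $\phi$ acting linearly on each region), and not on how that number is derived from the architecture of $\phi$. So the plan is simply to swap in the new bound $K = O(w^{d\cdot t})$ and track what falls out.

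First I would reproduce the convexity observation verbatim: for each linear region $R_i$ and each fixed $b \in [F]^{t-1}$, at most two of the segments $L_{a,b}$ (with $0 \le a < \ell$) can cross $\partial R_i$. Summing over the $N \le O(w^{d\cdot t})$ regions and the $F^{t-1}$ choices of $b$, the total number of segments meeting \emph{any} region boundary is at most $O(w^{d\cdot t}) \cdot F^{t-1}$. Every other segment lies entirely inside one region, where $\phi$ is linear, so on it the identity $\phi(3a,b) + \phi(3a+2,b) = 2\phi(3a+1,b)$ holds, which in turn forces $z_{6i+1} = z_{6i+4}$ exactly as in~\eqref{eq:if-linear}. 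Since there are $\lfloor F/3 \rfloor F^{t-1}$ total $(a,b)$ pairs, the analogue of~\eqref{eq:regions} becomes
\[
\Pr[z_{6i+1} = z_{6i+4}] \;\ge\; 1 - \frac{O(w^{d\cdot t})}{F}.
\]

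Next, by independence of the events $(z_{6i+1} = z_{6i+4})_{i < n_0}$ across the $n_0 = n/6$ path pairs, the probability that none of them collides is at most $\bigl(O(w^{d\cdot t})/F\bigr)^{n/6}$. Choosing $c > 0$ small enough so that $w^{d \cdot t} \le cF$ makes this quantity at most $(3/4)^{n/6}$, which is exactly the failure probability asserted in Theorem~\ref{thm:hidden-units-lower}. Contrapositively, to simulate the WL test successfully we must have $w^{d \cdot t} > cF$, i.e., $w = \Omega(F^{1/(dt)})$. Since the total number of ReLU units in any network in $\Phi_{t,d,w}$ is $dw$, this yields the claimed lower bound $dw = \Omega(d F^{1/(dt)})$.

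The only point that needs a sanity check is that Theorem~\ref{thm:linear-regions2} really gives the form of partition we need --- namely, a partition of the input cube $[0, F-1]^t$ into $O(w^{d \cdot t})$ convex polytopes on each of which the network acts as an affine map, so that convexity of the regions can be used to bound the number of segments $L_{a,b}$ crossing a boundary. This is exactly the content of \cite{raghu2017expressive}; once this is noted, the remainder of the argument from the proof of Theorem~\ref{thm:hidden-units-lower} transfers without modification, so I do not anticipate any substantive obstacle.
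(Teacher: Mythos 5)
Your proposal is correct and matches the paper's proof essentially line for line: both simply observe that the argument of Theorem~\ref{thm:hidden-units-lower} depends only on the number $K$ of convex linear regions of $\phi$, replace the bound $K\le 2^H$ from Theorem~\ref{thm:linear-regions1} with $K=O(w^{d\cdot t})$ from Theorem~\ref{thm:linear-regions2}, and then rerun the segment-counting and independence steps to get failure probability $1-(3/4)^{n/6}$ whenever $w^{d\cdot t}\le cF$. No substantive difference from the paper's approach.
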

As an example, for shallow neural networks with low input dimension, say with $d,t=O(1)$, this lower bound becomes $dw=F^{\Omega(1)}$, i.e. polynomial rather than logarithmic in the size of the underlying field. 

For certain architectures of the neural networks one can obtain even stronger bounds on the number of linear regions (see e.g., Proposition 3 in~\cite{montufar2017notes} and Theorem 1 in~\cite{serra2018bounding}). These bounds are parametrized in the number of ReLU units in each of the $d$ layers of the neural networks. One can therefore obtain even more fine grained lower bounds on the number of ReLU units if one makes more assumptions on the family of neural networks but the bounds are more opaque and we refrain from stating them here. 

\subsubsection{Description complexity}
In this subsection, we consider more general function classes $\Phi$ that do not necessarily have to consist of neural networks. We prove that for any aggregation function $f$, if for any two distinct multisets of labels each of size at most $n$, there exists a function $\phi \in \Phi$ such that the multisets are mapped to different labels by $\phi\circ f$, then $|\Phi|=\Omega\left(\frac{n}{\log n}\right)$. It follows that the description complexity of $\Phi$ must be $\Omega(\log n)$. Our bound is combinatorial, and does not employ the linear structure of $F^t$. Hence we may just put $t=1$ and think of $F$ as a set rather than a vector space.

\begin{theorem}\label{thm:description-complexity}
Let $F$ and $n$ be natural numbers and let $N=\sum_{i=0}^n\binom{i+F-1}{i}$ be the number of multisets of $[F]$ of size at most $n$. Suppose that $f$ is any aggregation function mapping multisets of $[F]$ to some range $\mathcal{R}$. Let $\Phi$ be any set of functions from $\mathcal{R}$ to $F$ and assume that  $|\Phi|< \frac{\log N}{\log F}$. Then there exists distinct multisets $A$ and $B$ each with at most $n$ elements from $[F]$ such that $\phi(f(A))=\phi(f(B))$ for all $\phi \in \Phi$. 
\end{theorem}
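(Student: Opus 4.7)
The plan is to prove this by a straightforward pigeonhole / counting argument, using the fact that the only way to distinguish multisets via $\Phi$ is through the joint ``signature'' $(\phi(f(A)))_{\phi \in \Phi}$, which lives in a set of size at most $F^{|\Phi|}$.

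First I would enumerate $\Phi = \{\phi_1, \ldots, \phi_k\}$ where $k = |\Phi|$, and define the signature map
\[
\Psi : \{\text{multisets of } [F] \text{ of size} \le n\} \to [F]^k, \qquad \Psi(A) = \bigl(\phi_1(f(A)),\, \phi_2(f(A)),\, \ldots,\, \phi_k(f(A))\bigr).
\]
The key observation is that if $\Psi(A) = \Psi(B)$ for distinct multisets $A \ne B$, then by definition $\phi(f(A)) = \phi(f(B))$ for every $\phi \in \Phi$, which is exactly the conclusion we want.

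Next I would count. The domain of $\Psi$ has cardinality exactly $N = \sum_{i=0}^n \binom{i+F-1}{i}$ (the standard formula for the number of multisets of $[F]$ of size at most $n$, which is given in the hypothesis). The codomain $[F]^k$ has cardinality $F^k = F^{|\Phi|}$. The assumption $|\Phi| < \frac{\log N}{\log F}$ rearranges to $F^{|\Phi|} < N$, so the domain is strictly larger than the codomain.

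By the pigeonhole principle, $\Psi$ cannot be injective, so there exist distinct multisets $A \ne B$ of size at most $n$ with $\Psi(A) = \Psi(B)$, giving $\phi(f(A)) = \phi(f(B))$ for all $\phi \in \Phi$, as required. There is no real obstacle here: the argument is purely combinatorial, does not use any structure of $f$, $\mathcal{R}$, or $\Phi$ beyond cardinalities, and in particular makes no use of the linear or vector-space structure of $[F]^t$ (which is why the theorem is stated with $t=1$, treating $F$ as a set rather than a module).
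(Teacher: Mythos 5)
Your proof is correct and takes essentially the same route as the paper's: the paper phrases it as repeated pigeonholing on the partitions $\mathcal{P}_\phi$ induced by each $\phi\in\Phi$, which is exactly equivalent to your single pigeonhole on the signature map $\Psi$ whose fibers are the common refinement of those partitions. Your version is marginally cleaner and equally rigorous.
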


\begin{proof}
Note that each function $\phi\in \Phi$ induces a partition $\mathcal{P}_\phi$ on the set of these multisets induced by the equivalence relation defined by $X\sim_{\phi} Y \Longleftrightarrow \phi(f(X))=\phi(f(Y))$. By repeated application of the pidgeonhole principle, there must exists a collection $\mathcal{C}$ of multisets each containing at most $n$ elements from $[F]$ such that (1) for all $\phi\in \Phi$ and all $X,Y\in \mathcal{C}$, $X\sim_{\phi} Y$ and (2) $|\mathcal{C}|\geq N/F^{|\Phi|}$. If in particular $|\Phi|< \frac{\log N}{\log F}$, we must have that $|\mathcal{C}|\geq 2$. Letting $A$ and $B$ be distinct elements of $\mathcal{C}$, we have that $\phi(f(A))=\phi(f(B))$ for all $\phi \in \Phi$.
\end{proof}

Note that number of distinct degrees of the vertices of a simple $n$-node graph could be as large as $\Omega(n)$, so it is a natural assumption that also $F=\Omega(n)$. Indeed, if $F$ is smaller, then for such a graph, the simulation of the WL test will fail with probability $1$ since it must inevitably assign two nodes of distinct degrees the same label in $F$. With this assumption, it follows that 
\begin{align*}
    \log N&=\log \sum_{i=0}^n\binom{i+F-1}{i}\geq \log  \binom{n+F-1}{n}\geq \log\left(1+\frac{F-1}{n}\right)^n\\
    &=\Omega\left(n\cdot \max\left\{1,\log \frac{F}{n}\right\}\right),
\end{align*}

using the inequality $\binom{n}{k}\geq \left(\frac{n}{k}\right)^k$. Thus,
$$
\frac{\log N}{\log F}=\Omega\left(n \cdot \frac{\max\left\{1,\log \frac{F}{n}\right\}}{\log F} \right)=\Omega\left(\frac{n}{\log n} \right).
$$
In particular, the description complexity of $\Phi$ has to be $\Omega(\log n)$ in order to separate any two distinct multisets $X,Y\subset F$ each consisting of at most $n$ elements.
We note that the description complexity of the construction in Section~\ref{sec:best_construction} is $\poly \log n$. 

\end{document}